\documentclass{article}



 \usepackage[dblblindworkshop, final]{neurips_2025}
\workshoptitle{GenAI in Finance}



\usepackage[utf8]{inputenc} 
\usepackage[T1]{fontenc}    
\usepackage{hyperref}       
\usepackage{url}            
\usepackage{booktabs}       
\usepackage{amsfonts}       
\usepackage{nicefrac}       
\usepackage{microtype}      
\usepackage{xcolor}         

\usepackage{amsmath}
\usepackage{amssymb}
\usepackage{amsthm}

\usepackage{enumerate}
\usepackage{graphicx}
\usepackage{multicol, multirow}
\usepackage{threeparttable}
\usepackage[figuresright]{rotating}
\usepackage[ruled,linesnumbered]{algorithm2e}
\usepackage[justification=centering]{caption}
\captionsetup[table]{skip=5pt}

\newtheorem{theorem}{Theorem}[section]

\newtheorem{proposition}[theorem]{Proposition}

\theoremstyle{definition}

\newtheorem{definition}{Definition}[section]

\theoremstyle{remark}
\newtheorem{remark}{Remark}[section]


\usepackage{amsmath,amsfonts,bm,bbm}



















\def\rd{{\mathrm{d}}}










\DeclareMathAlphabet{\mathsfit}{\encodingdefault}{\sfdefault}{m}{sl}
\SetMathAlphabet{\mathsfit}{bold}{\encodingdefault}{\sfdefault}{bx}{n}


\def\gA{{\mathcal{A}}}

\def\gF{{\mathcal{F}}}

\def\gL{{\mathcal{L}}}

\def\gX{{\mathcal{X}}}


\def\0{{\bf 0}}
\def\1{{\bf 1}}





\def\PB{{\mathbb P}}





\def\argmax{\mathop{\rm argmax}}

\def\ent{\operatorname{Ent}}

\newcommand{\R}{\mathbb{R}}
\newcommand{\E}{\mathbb{E}}
\newcommand{\Var}{\mathrm{Var}}











\newcommand{\MV}{\operatorname{MV}}
\newcommand{\OCE}{\operatorname{OCE}}
\newcommand{\Ent}{\operatorname{Ent}}
\newcommand{\HD}{\text{HD}}
\newcommand{\mkv}{\text{Mkv}}
\newcommand{\aug}{\text{aug}}

\title{Risk-Sensitive Q-Learning in Continuous Time with Application to Dynamic Portfolio Selection}

%

\author{%
  Chuhan~Xie \\
  School of Mathematical Sciences \\
  Peking University\\
  Beijing, 100871 \\
  \texttt{ch\_xie@pku.edu.cn} \\
}

\begin{document}

\maketitle

\begin{abstract}
  This paper studies the problem of risk-sensitive reinforcement learning (RSRL) in continuous time, where the environment is characterized by a controllable stochastic differential equation (SDE) and the objective is a potentially nonlinear functional of cumulative rewards. We prove that when the functional is an optimized certainty equivalent (OCE), the optimal policy is Markovian with respect to an augmented environment. We also propose \textit{CT-RS-q}, a risk-sensitive q-learning algorithm based on a novel martingale characterization approach. Finally, we run a simulation study on a dynamic portfolio selection problem and illustrate the effectiveness of our algorithm.
\end{abstract}

\section{Introduction}

Traditional reinforcement learning (RL) algorithms typically aim to maximize the expected cumulative reward in a discrete-time setting. However, they often struggle in environments that are inherently continuous, or in applications where the full distributional characteristics of returns are of particular concern.
A representative example arises in high-frequency trading: since trades occur irregularly over time, regular discretization of timestamps may cause potential information loss and training instability. Moreover, while achieving higher profits is desirable, it should not come at the cost of excessive volatility or drawdowns. Hence, the variance of the trader’s profit, alongside its expectation, becomes a crucial performance criterion.

Existing research has addressed each of these two challenges separately. For the first, \citet{wang2020reinforcement,jia2022policy1,jia2022policy2,jia2023q,zhao2023policy,tang2022exploratory} rigorously extend several RL algorithms to a continuous-time framework. For the second, numerous studies on risk-sensitive learning attempt to optimize risk measures beyond the expectation operator \citep{mihatsch2002risk,geibel2005risk,shen2014risk,fei2020risk,lutjens2019safe,garcia2015comprehensive}. Likewise, distributional reinforcement learning (DRL) focuses on learning the entire return distribution rather than specific moments or functionals \citep{bellemare2017distributional,dabney2018distributional,rowland2019statistics,bellemare2023distributional}. These approaches have achieved notable success both theoretically and empirically.

However, few studies have explored scenarios where both challenges coexist. This motivates our work, which takes an initial step toward unifying the two perspectives. We aim to bridge their intrinsic connections and establish a rigorous methodological foundation for algorithms that are simultaneously compatible with continuous-time modeling and risk-sensitive objectives. This paper provides a conceptual overview of the core ideas underlying continuous-time risk-sensitive RL, and demonstrates the feasibility of algorithms inspired by these principles.

\section{Problem Formulation}
\label{sec: problem formulation}

We first state our formulation of continuous-time risk-sensitive RL. Our problem is to control the state dynamics governed by a stochastic differential equation (SDE), defined on a filtered probability space $(\Omega, \gF, (\gF_s)_{s\geq 0}, \PB)$ along with a standard $n$-dimensional Brownian motion $W=\{W_s, s\geq 0\}$:
\begin{align}
\label{eq: sde}
    \rd X_s^{\pi} = \mu(s, X_s^{\pi}, a^{\pi}_s) \rd s + \sigma(s, X_s^{\pi}, a^{\pi}_s) \rd W_s,\quad s\in [t,T].
\end{align}
Here, $\mu\colon [0,T]\times \R^d \times \gA \rightarrow \R^d$ and $\sigma \colon [0,T] \times \R^d \times \gA \rightarrow \R^{d\times n}$ are given functions, where $\gA\subset \R^m$ is the action space. 
The agent's action $a^{\pi}_s$ is sampled independently of the Brownian motion according to a (possibly history-dependent) policy $\pi(\cdot \mid \gF_s^X)$, where $\gF_s^X$ refers to the natural filtration containing $(X_u^\pi)_{t\leq u\leq s}$ and $(a_u^\pi)_{t\leq u < s}$.

Let $r\colon [0,T] \times \R^d \times \gA \rightarrow \R$ be an instantaneous reward function, $h\colon \R^d \rightarrow \R$ be the lump-sum reward function applied at the end of the period, and $\delta > 0$ be a discount factor that measures the time-value of the payoff. By simulating the process \eqref{eq: sde} from $X_t^{\pi} = x$, we eventually receive a sum of discounted rewards:
\begin{align}
    Z^{\pi}(t,x) &= \int_t^T e^{-\delta(s-t)}r(s, X_s^{\pi}, a^{\pi}_s) \rd s + e^{-\delta (T-t)} h(X_T^{\pi}).
\end{align}
Given a risk measure $U\colon L^2(\Omega, \gF, \PB) \rightarrow \R$, our goal is to find the optimal policy $\pi^*$ that maximizes the risk-sensitive objective with entropy regularization: 
\begin{align}
\label{eq: J0}
    J_0^{\pi}(t, x) = U(Z^{\pi}(t,x)) + \tau\ent(\pi;t,x),
\end{align}
where $\Ent(\pi;t,x) = -\E\left[\int_t^T e^{-\delta (s-t)} \log \pi(a_s^{\pi}\mid \gF_s^X) \rd s \mid X_t^{\pi} = x\right]$ is defined as the discounted cumulative entropy of the policy $\pi$ along the process starting from $X_t^\pi = x$.


\section{Resolving Nonlinearity of the Functional Objective}
\label{sec: resolve nonlinear}

When $U$ is not the expectation operator as in traditional RL \citep{jia2022policy1,jia2022policy2,jia2023q}, typical algorithms based on Bellman equations fail due to its intrinsic nonlinearity. In fact, the optimal policy is generally not Markovian \citep{wang2024reductions}, which renders it complex to find the optimal policy.

To tackle this issue, we consider a special kind of risk measures called optimized certainty equivalents (OCEs), and show that the optimal policy is Markovian with respect to an augmented SDE in this case. In this way, the original problem breaks down to a conventional policy optimization task followed by a reverse transformation of the learned policy from the augmented SDE to the original one.

\subsection{OCE risk measures}
\label{sec: oce rm}

Optimized certainty equivalents (OCEs) are special functionals on random variables that admit a variational expression with an expectation operator:
\begin{align}
    U(W) = \OCE_{\varphi}(W) = \sup_{\eta \in \R} \left\{ \eta + \E[\varphi(W - \eta)] \right\},
\end{align}
where $W\in L^2(\Omega, \gF, \PB)$ is a random variable, and $\varphi \colon \R \rightarrow \R$ is a concave utility function. 

Table~\ref{tab: OCE} lists a series of common OCE risk measures along with their utility functions, including linear/exponential/logarithm utilities, CVaR risk, etc. The concept generalizes the classical notion of certainty equivalents by incorporating a utility function and an optimization procedure to unify a wider range of risk measures \citep{ben2007old}, and has been widely adopted in the optimization and mathematical finance literature.

\subsection{Markovian optimality}

It can be shown that when the risk measure $U$ is an OCE with respect to the utility function $\varphi$, the optimal policy is Markovian with respect to the following augmented SDE:
\begin{align}
\label{eq: augmented sde}
    \rd \underbrace{
    \begin{pmatrix}
        X_s^\pi \\ B_{0,s}^\pi \\ B_{1,s}^\pi
    \end{pmatrix}
    }_{X_{\aug, s}^\pi} = \underbrace{
    \begin{pmatrix}
        {\mu}(s, {X}_s^\pi, a_s^\pi) \\ {B}_{1,s}^\pi {r}(s, {X}_s^\pi, a_s^\pi) \\ -\delta {B}_{1,s}^\pi
    \end{pmatrix}
    }_{{\mu}_{\aug}(s, {X}_s^\pi, {B}_{0,s}^\pi, {B}_{1,s}^\pi, a_s^\pi)} \rd s + \underbrace{
    \begin{pmatrix}
        {\sigma}(s, {X}_s^\pi, a_s^\pi) \\ 0_n^\top \\ 0_n^\top
    \end{pmatrix}
    }_{{\sigma}_{\aug}(s, {X}_s^\pi, {B}_{0,s}^\pi, {B}_{1,s}^\pi, a_s^\pi)} \rd W_s, \quad s\in [t,T],
\end{align}
with the instantaneous reward function $r_{\aug}(t, x, b_0, b_1, a) \equiv 0$ and the lump-sum reward function $h_{\aug}(x, b_0, b_1) = \varphi(b_0 + b_1 h(x) )$. In the sense of conventional RL, the entropy-regularized value function related to \eqref{eq: augmented sde} is as follows (see Appendix \ref{app: derive aug value}):
\begin{align}
\label{eq: J}
    J^{\pi}(t,x,b_0,b_1) = \E[\varphi(b_0 + b_1 Z^{\pi}(t,x))] + \tau b_1 \Ent(\pi; t,x).
\end{align}

Compared with \eqref{eq: sde}, the augmented SDE \eqref{eq: augmented sde} has two additional states $(B_{0,s}^\pi, B_{1,s}^\pi)$ that respectively track the cumulative reward so far and the effect of the discount factor \citep{bauerle2011markov,bauerle2021minimizing,wang2024reductions}.

\begin{proposition}
\label{prop: markovian policy}
    If the risk measure $U$ is an OCE with respect to the utility function $\varphi$, the optimal policy $\pi_0^* = \argmax_{\pi} J^{\pi}_0(t,x)$ of the SDE \eqref{eq: sde}-\eqref{eq: J0} is Markovian with respect to the augmented state $(X_s, B_{0,s}, B_{1,s})$~\footnote{We have omitted the superscript $\pi$ on every state variable for notational simplicity.}, where $B_{0,s} = b_{0} + b_{1} \int_t^s e^{-\delta (u-t)} r(u, X_u, a_u) \rd u$ and $B_{1,s} = b_{1} e^{-\delta(s-t)}$ for some $(b_0, b_1)\in \R\times \R_+$; i.e., $\pi_0^*(\cdot\mid \gF_s^X) = \pi_0^*(\cdot\mid X_s, B_{0,s}, B_{1,s})$ for any $s\in [t, T]$.
\end{proposition}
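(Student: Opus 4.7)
The plan is to exploit the variational representation of the OCE to linearize the objective, then reduce the original non-Markovian problem to a family of Markovian ones parametrized by the OCE auxiliary variable $\eta$. Concretely, by definition of the OCE,
\begin{align*}
    J_0^{\pi}(t,x) = \sup_{\eta \in \R} \bigl\{ \eta + \E[\varphi(Z^{\pi}(t,x) - \eta)] \bigr\} + \tau \ent(\pi; t, x).
\end{align*}
Since the inner $\sup_\eta$ does not involve $\pi$ and the entropy term is independent of $\eta$, I can interchange the suprema to obtain
\begin{align*}
    \sup_{\pi} J_0^{\pi}(t,x) = \sup_{\eta \in \R} \Bigl\{ \eta + \sup_{\pi} \bigl[ \E[\varphi(Z^{\pi}(t,x) - \eta)] + \tau \ent(\pi; t, x) \bigr] \Bigr\}.
\end{align*}

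Next I would recognize the bracketed inner maximization as precisely the augmented value function \eqref{eq: J} evaluated at $(b_0, b_1) = (-\eta, 1)$, i.e.\ $\sup_\pi J^{\pi}(t, x, -\eta, 1)$. This is the crux of the reduction: the OCE has transformed the nonlinear functional of $Z^\pi$ into an ordinary expectation of $\varphi$ applied to an affine shift of $Z^\pi$, and this affine shift is exactly what the auxiliary states $(B_{0,s}, B_{1,s})$ in \eqref{eq: augmented sde} are designed to track along a trajectory.

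Once the inner problem is phrased in terms of the augmented SDE \eqref{eq: augmented sde}, I would invoke the classical Markovian optimality result for entropy-regularized continuous-time stochastic control: because $(X_s, B_{0,s}, B_{1,s})$ is jointly Markov under any policy adapted to its natural filtration, the drift and diffusion of the augmented SDE depend only on the current augmented state and action, and the reward functionals $r_{\aug} \equiv 0$, $h_{\aug}(x, b_0, b_1) = \varphi(b_0 + b_1 h(x))$ are functions of the augmented state alone, the associated HJB equation admits a Markovian optimizer $\pi^*(\,\cdot\,|\,x, b_0, b_1)$. Plugging this optimizer back into the outer maximization over $\eta$ and denoting the maximizer by $\eta^*(t,x)$, the optimal policy for the original problem is obtained by running $\pi^*$ on the trajectory initialized at $(X_t, B_{0,t}, B_{1,t}) = (x, -\eta^*(t,x), 1)$, which yields $B_{0,s} = -\eta^*(t,x) + \int_t^s e^{-\delta(u-t)} r(u, X_u, a_u)\,du$ and $B_{1,s} = e^{-\delta(s-t)}$, matching the parametrization in the proposition statement (with $b_0 = -\eta^*(t,x)$, $b_1 = 1$).

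The main obstacle I anticipate is rigorously justifying the interchange of $\sup_\pi$ and $\sup_\eta$: this requires either pointwise boundedness of the OCE (so that an optimal $\eta^*$ exists) or an approximation argument with truncated $\eta$, plus care that the same $\pi$ is admissible for every $\eta$. A secondary subtlety is handling the entropy term consistently between the original and augmented formulations; since $B_{1,s} = e^{-\delta(s-t)}$ when $b_1 = 1$, the factor $\tau b_1 \ent$ in \eqref{eq: J} reproduces the discounted entropy of \eqref{eq: J0} exactly, but I would verify this identification carefully before invoking the HJB principle for the augmented problem.
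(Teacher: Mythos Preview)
Your proposal is correct and follows essentially the same route as the paper: both exploit the OCE variational form to interchange $\sup_\pi$ and $\sup_\eta$, identify the inner problem with the augmented value function $J^\pi(t,x,-\eta,1)$, invoke classical Markovian optimality for the augmented SDE, and then reconstruct $\pi_0^*$ from the Markovian optimizer at the optimal auxiliary value $\eta^*$ (the paper writes $b^*$). The subtleties you flag---the supremum interchange and the entropy bookkeeping via $B_{1,s}=e^{-\delta(s-t)}$---are exactly the points the paper handles (the latter explicitly, the former tacitly).
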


\subsection{Meta-algorithm for policy optimization}

In fact, Proposition \ref{prop: markovian policy} is derived by bridging two value functions, \eqref{eq: J0} and \eqref{eq: J} (see Appendix \ref{app: equiv opt val policy}).  Following this idea, we can design a meta-algorithm that first solves policy optimization for the augmented SDE \eqref{eq: augmented sde}-\eqref{eq: J}, and then optimizes over a scalar parameter according to the definition of the OCE risk measure $U$ to obtain the optimal value function $J_0^*$ and the optimal policy $\pi_0^*$. The meta-algorithm is summarized in Algorithm \ref{alg: meta alg}.

\section{Risk-Sensitive Q-Learning}
\label{sec: rsql}

In this section, we focus on developing a q-learning algorithm to solve for the optimal value function $J^{*}(t,x,b_0,b_1)$ and its optimal policy $\pi^*(\cdot\mid t,x,b_0,b_1)$, following the martingale approach proposed in \cite{jia2022policy1}.

\subsection{Martingale characterization}

Let $\{(M_s^\theta)_{s\in[t,T]}\}_{\theta\in\Theta}$ be a set of parameterized $\gF$-adapted stochastic processes, and let $\theta^*\in \Theta$ be the unique parameter such that $(M_s^{\theta^*})_{s\in[t,T]}$ is an $(\gF, \PB)$-martingale.
According to \cite{jia2022policy1,jia2022policy2}, $\theta=\theta^*$ if and only if for any $\gF$-adapted test process $(\xi_s)_{s\in[t,T]}$, the following \textit{martingale orthogonality condition} holds:
\begin{align}
\label{eq: martingale ortho cond}
    \E\left[ \int_t^T \xi_s \rd M_s^\theta \right] = 0.
\end{align}
This motivates us to find a martingale characterization of the value function, so that under proper parameterization, we can obtain a learned optimal value function as long as the condition \eqref{eq: martingale ortho cond} is satisfied. Theorem \ref{thm: J q martingale} below provides a rigorous statement of such martingale characterization.

\begin{theorem}
\label{thm: J q martingale}
    Let a policy $\pi$, a function $\hat{J} \in C^{1,2}([0,T) \times \R^d \times \R \times \R_+) \cap C([0,T] \times \R^d \times \R \times \R_+)$, and a continuous function $\hat{q}\colon [0, T] \times \R^d \times \R \times \R_+ \times \gA \rightarrow \R$ be given such that for any quadruple $(t,x,b_0,b_1) \in [0,T] \times \R^d \times \R \times \R_+$,
    \begin{align}
    \label{eq: constraints}
        \hat{J}(T, x, b_0, b_1) = \varphi(b_0 + b_1 h(x)),\quad \int_{\gA} \exp\left\{ \frac{\hat{q}(t, x, b_0, b_1, a)}{\tau b_1} \right\} \rd a = 1.
    \end{align}
    Then, $\hat{J} = J^{*}$ and $\hat{q} = q^{*}$ respectively, if and only if for any $(t,x) \in [0,T] \times \R^d$, the following process is an $(\gF, \PB)$-martingale:
    \begin{align}
    \label{eq: j q martingale}
        \hat{J}(s, X_s^{\pi}, Y_{s}^{\pi}, e^{-\delta (s-t)}) - \int_t^s \hat{q}(u, X_u^{\pi}, Y_{u}^{\pi}, e^{-\delta (u-t)}, a_u^{\pi}) \rd u, \quad s\in [t,T].
    \end{align}
    Here, $J^*$ is the optimal value function, $q^*$ is the optimal q-function as defined in Appendix \ref{sec: lowercase q}, and $Y_s^\pi = \int_t^s e^{-\delta(u-t)} r(u, X_u^\pi, a_u^\pi) \rd u$ is the discounted cumulative reward up to time $s$.
\end{theorem}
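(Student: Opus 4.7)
The approach follows the martingale-characterization framework of \cite{jia2022policy1,jia2023q}, adapted to the augmented state $(x, b_0, b_1)$ introduced for the OCE reduction. Both directions rely on Itô's formula applied along the augmented path, combined with the entropy-regularized HJB equation of the control problem \eqref{eq: augmented sde}--\eqref{eq: J}. The normalization constraint in \eqref{eq: constraints} encodes a Gibbs policy $\hat\pi(a\mid t,x,b_0,b_1) \propto \exp(\hat q/(\tau b_1))$ with $\hat q/(\tau b_1) = \log\hat\pi$; solving the inner maximization in the HJB yields the equivalent ``q-form''
\begin{align*}
    \partial_s J^*(s,x,b_0,b_1) + \mathcal L^a J^*(s,x,b_0,b_1) = q^*(s,x,b_0,b_1,a), \qquad \forall a\in\gA,
\end{align*}
together with $\int_\gA \exp(q^*/(\tau b_1))\,\rd a = 1$ and $J^*(T,x,b_0,b_1) = \varphi(b_0 + b_1 h(x))$, where $\mathcal L^a$ denotes the generator of \eqref{eq: augmented sde}.

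\textbf{Necessity.} Assuming $\hat J = J^*$ and $\hat q = q^*$, I would apply Itô's formula to $J^*(s, X_s^\pi, Y_s^\pi, e^{-\delta(s-t)})$ under an arbitrary policy $\pi$. The $\rd s$-drift equals $q^*(s, X_s^\pi, Y_s^\pi, e^{-\delta(s-t)}, a_s^\pi)$ by the q-form HJB above, so subtracting the integral leaves only the stochastic term $\nabla_x J^*\cdot \sigma\,\rd W_s$, which is a true martingale under standard integrability of $\nabla_x J^*$ and $\sigma$.

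\textbf{Sufficiency.} Conversely, assume the process in \eqref{eq: j q martingale} is an $(\gF,\PB)$-martingale for every admissible $\pi$. Itô's formula applied to $\hat J$ followed by drift matching yields
\begin{align*}
    \partial_s \hat J(s,x,b_0,b_1) + \mathcal L^{a}\hat J(s,x,b_0,b_1) = \hat q(s,x,b_0,b_1,a), \qquad \rd s\otimes\rd\PB\text{-a.e.}
\end{align*}
Varying $\pi$ extends this identity to every $a\in\gA$ pointwise at the visited states. Averaging against the Gibbs density $\hat\pi \propto \exp(\hat q/(\tau b_1))$ (a probability density by \eqref{eq: constraints}) and substituting $\hat q/(\tau b_1) = \log\hat\pi$ reconstructs the entropy-regularized HJB for $\hat J$, and $\hat\pi$ attains its inner supremum (since $\hat q$ differs from $\mathcal L^a\hat J$ only by an $a$-independent term). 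Combined with the terminal condition $\hat J(T,x,b_0,b_1) = \varphi(b_0+b_1 h(x))$, uniqueness of the HJB solution forces $\hat J = J^*$, and then $\hat q = q^*$ follows from the drift identity.

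\textbf{Main obstacle.} The most delicate step is the pointwise-in-$a$ extraction in the sufficiency direction: the martingale property only constrains drifts integrated against the sampled actions $a_s^\pi$, so recovering the identity for every $a\in\gA$ requires a test-process argument, i.e.\ varying $\pi$ over a sufficiently rich class (for instance small open-loop perturbations of a reference policy) so as to separate the drift contribution of each action. A secondary subtlety is that the martingale paths sweep out only $b_1 = e^{-\delta(s-t)}\in(e^{-\delta T},1]$ and $b_0$ in the realized range of $Y_s^\pi$; identifying $\hat J$ and $\hat q$ on the full domain uses continuity of the candidates together with the time-translation structure of the augmented dynamics, letting $(t,x)$ range over $[0,T]\times\R^d$ to cover the needed region.
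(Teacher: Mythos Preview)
Your necessity direction is essentially the paper's proof: apply It\^o to $J^*(s,X_s^\pi,Y_s^\pi,e^{-\delta(s-t)})$, observe that the $\rd s$-drift equals $q^*$ by the q-form of the HJB, and what remains is the Brownian integral.

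For sufficiency the paper takes a different route from yours, and the difference matters because your plan has a gap. The hypothesis fixes a \emph{single} policy $\pi$; the ``if and only if'' is asserted for that one $\pi$. Your sufficiency argument, as written, assumes the martingale property ``for every admissible $\pi$'' and then proposes to recover the pointwise-in-$a$ drift identity by \emph{varying} $\pi$ over a rich class of perturbations. That is not licensed by the statement: you must conclude $\hat J=J^*$, $\hat q=q^*$ from the martingale property along trajectories of one fixed $\pi$.

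The paper sidesteps this entirely. It observes that the normalization constraint in \eqref{eq: constraints} makes $\hat\pi(a\mid t,x,b_0,b_1):=\exp\{\hat q/(\tau b_1)\}$ a bona fide density, and that for this $\hat\pi$ one automatically has $\int_\gA \hat q\,\hat\pi\,\rd a + \tau b_1\,\Ent(\hat\pi)=0$. This is exactly the policy-evaluation constraint \eqref{eq: J q pi pi' constraint} in the paper's off-policy characterization (Theorem~\ref{thm: J q pi pi' martingale}), whose part~(ii) requires only the existence of \emph{one} behavior policy making the process a martingale. Applying it with the given $\pi$ as behavior policy and $\hat\pi$ as target policy yields $\hat J=J^{\hat\pi}$ and $\hat q=q^{\hat\pi}$. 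Finally, since $\hat\pi=\exp\{q^{\hat\pi}/(\tau b_1)\}$ is a fixed point of the softmax policy-improvement map, $\hat\pi$ is optimal and hence $\hat J=J^*$, $\hat q=q^*$. No variation over policies is needed.

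If you want to salvage your direct HJB-verification route without invoking the auxiliary theorem, the correct fix is not to vary $\pi$ but to use the structure of the single given $\pi$: uniqueness of the semimartingale decomposition forces $\gL_{\aug}^a\hat J=\hat q$ to hold $\rd s\otimes\pi(\rd a\mid\cdot)$-a.e.\ along trajectories, and if $\pi$ has full support on $\gA$ (as the exploratory Gibbs policies in this framework do), continuity of both sides upgrades this to all $a$. From there your Gibbs-averaging step and the fixed-point conclusion go through. But as stated, your ``main obstacle'' paragraph points in the wrong direction.
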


\subsection{Algorithm}

Based on the martingale characterization established in the previous section, we propose an on-policy continuous-time risk-sensitive q-learning (CT-RS-q) algorithm as Algorithm \ref{alg: on-policy opt}. Specifically, we simultaneously parameterize the value function as $J^\theta$ and the q-function as $q^\psi$, and set the test functions $\xi_t$ and $\zeta_t$ as their parameter gradients. The parameters $\theta$ and $\psi$ are then updated by the average of temporal-difference errors after generation of every whole episode.

\section{Application to Dynamic Portfolio Selection}
\label{sec: app mv_portfolio}

We discuss an application to dynamic portfolio selection to illustrate the effectiveness of our proposed algorithm. Consider a market with two risky assets, whose prices follow the log-normal dynamics:
\begin{align}
    \rd S_{1,t} = S_{1,t} (r_1 \rd t + \sigma_1 \rd W_{1,t}), \quad \rd S_{2,t} = S_{2,t} (r_2 \rd t + \sigma_2 \rd W_{2,t}).
\end{align}
We would like to invest a $\$ 1$ budget on these two assets and are allowed to continuously reallocate the money between assets. Denoting the proportion of money invested on the first asset as $a_t$, our budget $X_t$ follows another log-normal process:
\begin{align}
\label{eq: portfolio sde}
    \rd X_t = X_t \{a_t r_1 + (1-a_t) r_2\} \rd t + X_t \{a_t\sigma_1 \rd W_{1,t} + (1 - a_t) \sigma_2 \rd W_{2,t} \}.
\end{align}

We are concerned with our budget $X_T$ at the end of the whole period. While expecting the mean of $X_T$ to be as large as possible, we would also like to avoid excessive variance; and therefore, we choose the mean-variance risk measure as our objective:
\begin{align}
\label{eq: mvo}
    \MV(X_T) = \E[X_T] - \frac{\alpha}{2} \Var(X_T).
\end{align}
Since $\MV(\cdot)$ is an OCE as shown in Table \ref{tab: OCE}, our algorithm is applicable to the above financial scenario. A rigorous formulation and some detailed discussion are deferred to Appendix \ref{app: detail dps}.

Firstly, we examine the convergence of the model parameters as defined in Appendix \ref{app: mv_portfolio parameterization}. Figure \ref{fig: experiment converge} illustrates the evolution of eight model parameters, most of which converge to their optimal points. The last two parameters, which belong to the parameterized q-function $q^\psi$, stay slightly far from their optimal points. We believe that such deviation is caused by the non-shrinking exploration parameter $\tau>0$ in the training phase.

Secondly, we compare the performance of three policies: (i) Baseline Policy, which always invests a fixed proportion ($a=0.5$) of the budget between two assets; (ii) CT-RS-q Policy, which is trained according to Algorithm \ref{alg: on-policy opt}; (iii) Optimal Policy, whose analytical formula is given in Appendix \ref{app: mv_portfolio analytical}. Table \ref{tab: terminal performance compare} lists the cumulative return and the mean-variance objective \eqref{eq: mvo} of the three policies at the end of the whole period, and Figure \ref{fig: experiment cum_ret mvo} plots their curves over time.

We find that the trained CT-RS-q policy is close to the optimal policy, both outperforming the baseline policy in the cumulative return and the mean-variance objective. This comes at the cost of a slightly larger volatility, which is further controllable by tuning the regularization parameter $\alpha>0$ in \eqref{eq: mvo} during the training period.

\begin{table}[ht]
    \centering
    \caption{Performance comparison of three policies.}
    \label{tab: terminal performance compare}
    \begin{tabular}{lccc}
    \toprule
     & Cumulative Return $\uparrow$ (Std. Dev. $\downarrow$) & Mean-Variance Objective $\uparrow$ \\
    \midrule
    Baseline Policy ($a = 0.5$) & 0.2217 (0.0957) & 1.2171 \\
    CT-RS-q Policy & 0.8163 (0.8716) & 1.4365 \\
    Optimal Policy & 0.7128 (0.7205) & 1.4532 \\
    \bottomrule
    \end{tabular}
\end{table}



\clearpage
\bibliographystyle{apalike}
\bibliography{ref.bib}


\clearpage
\begin{sidewaystable}[htbp]
    \centering
    \begin{threeparttable}
        \resizebox{0.8\textwidth}{!}{
            \begin{tabular}{lllll}
                \toprule
                 & & \multicolumn{1}{c}{Basic Function} & \multicolumn{1}{c}{Utility Function} & \multicolumn{1}{c}{OCE Formula} \\
                 & & \multicolumn{1}{c}{$u(w)$} & \multicolumn{1}{c}{$\varphi(t)$} & \multicolumn{1}{c}{$\OCE_{\varphi}(W)$} \\
                \midrule
                \multirow{4}{*}{Risk Aversion} & General & $u(w)$ concave, increasing & $\inf_{\delta>0} \left\{ \frac{u(\delta+t) - u(\delta)}{u'(\delta)} \right\}$ & $u^{-1}(\E[u(W)])$ \\
                 & Exponential Utility & $-e^{-\alpha w}, \alpha>0$ & $\frac{1-e^{-\alpha t}}{\alpha}$ & $-\frac{1}{\alpha}\ln \E[e^{-\alpha W}]$ \\
                 & Power Utility & $\frac{w^{1-\gamma}-1}{1-\gamma}, 0<\gamma<1$ & $\inf_{\delta>\max\{0, -t\}} \left\{ \delta^\gamma \frac{(\delta+t)^{1-\gamma} - \delta^{1-\gamma}}{1-\gamma} \right\}$ & $(\E[W^{1-\gamma}])^{\frac{1}{1-\gamma}}$ \\
                 & Logarithm Utility & $\ln w$ & $\inf_{\delta>\max\{0, -t\}} \left\{ \delta \ln\left( 1+\frac{t}{\delta} \right) \right\}$ & $\exp(\E[\ln W])$ \\
                \midrule
                Risk Neutral & Linear Utility & $w$ & $t$ & $\E[W]$ \\
                \midrule
                \multicolumn{2}{l}{Conditional Value-at-Risk (CVaR)} & - & $\frac{1}{\beta}\min\{0, t\}$ & $\E[W \mid W\leq \operatorname{VaR}_{\beta}(W)]$ \\
                \multicolumn{2}{l}{Mean-Variance (MV)} & - & $t - \frac{\beta}{2}t^2$ & $\E[W] - \frac{\beta}{2}\Var(W)$ \\
                \multicolumn{2}{l}{Monotone Mean-Variance (MMV)} & - & $\min\{1, t\} - \frac{\beta}{2}(\max\{1, t\})^2$ & \textit{no explicit form} \\
                \bottomrule
            \end{tabular}
        }
        \caption{Examples of OCE risk measures and their corresponding utility functions.}
        \label{tab: OCE}
    \end{threeparttable}
\end{sidewaystable}

\clearpage
\begin{algorithm}[ht]
    \caption{Meta-algorithm}
    \label{alg: meta alg}
    \textbf{Train} based on the augmented SDE:\\
        \ (i) Optimal value function $\hat{J}^*(t,x,b_0,b_1)$\;
        (ii) Optimal policy $\hat{\pi}^*(t,x,b_0,b_1)$\;
    \SetKwFunction{FMain}{OptimalValueAndPolicy}
    \SetKwProg{Fn}{Function}{:}{}
    \Fn{\FMain{$t$, $x$, $\hat{J}^*$, $\hat{\pi}^*$}}{
        Optimal initial budget: $\hat{b}^* \leftarrow \argmax_{b \in \R} \big\{ b + \hat{J}^*(t, x, -b, 1) \big\}$\;
        Optimal value function: $\hat{J}^*_0(t,x) \leftarrow \hat{b}^* + \hat{J}^*(t, x, -\hat{b}^*, 1)$\;
        Cumulative reward: $Y_s \leftarrow \int_t^s e^{-\delta(u-t)} r(u, X_u, a_u) \rd u$\;
        Optimal policy: $\hat{\pi}^*_0(\cdot\mid s, X_s, Y_{s}) \leftarrow \hat{\pi}^*(\cdot \mid s, X_s, Y_{s}-\hat{b}^*, e^{-\delta (s-t)}), \forall s\in [t,T]$\;
        \Return{$\hat{J}^*_0$, $\hat{\pi}^*_0$}
    }
\end{algorithm}

\begin{algorithm}[ht]
    \caption{CT-RS-q: continuous-time risk-sensitive q-learning (on-policy)}
    \label{alg: on-policy opt}
    \KwIn{initial state $x_0$, number of episodes $N$, time horizon $T$, number of mesh grids $K$, \\ \qquad\quad mesh grids $0=t_0<t_1<\dots<t_K=T$, learning rates $\{l^\theta_j, l^\psi_j\}_{j=1}^N$, temperature $\tau>0$, \\ \qquad\quad parameterized value function $J^\theta(\cdot,\cdot,\cdot,\cdot)$ and q-function $q^\psi(\cdot,\cdot,\cdot,\cdot,\cdot)$}
    \For{episode $j = 1$ \KwTo $N$}{
        Observe initial state $x_{0}$ and set $(X_{t_0}, B_{0, t_0}, B_{1, t_0}) = (x_0, 0, 1)$\;
        \For{timestep $k = 0$ \KwTo $K-1$}{
            Generate action $a_{t_k}\sim \pi^{\psi}(\cdot\mid t_k, X_{t_k}, B_{0,t_k}, B_{1,t_k}) \propto \exp\{ \frac{1}{\tau B_{1,t_k}} q^\psi(t_k, X_{t_k}, B_{0,t_k}, B_{1,t_k}, \cdot) \}$\;
            Simulate \eqref{eq: augmented sde} from $t_k$ to $t_{k+1}$ and observe new state $(X_{t_{k+1}}, B_{0, t_{k+1}}, B_{1, t_{k+1}})$\;
            Store test functions: 
            \begin{align*}
                \xi_{t_k}=\frac{\partial J^\theta}{\partial \theta}(t_k, X_{t_k}, B_{0,t_k}, B_{1,t_k}),\quad \zeta_{t_k}=\frac{\partial q^\psi}{\partial \psi}(t_k, X_{t_k}, B_{0,t_k}, B_{1,t_k}, a_{t_k});
            \end{align*}\\
            Store value function and q-function: 
            \begin{align*}
                J^\theta_{t_k}=J^\theta(t_k, X_{t_k}, B_{0,t_k}, B_{1,t_k}),\quad q^\psi_{t_k} = q^\psi(t_k, X_{t_k}, B_{0,t_k}, B_{1,t_k}, a_{t_k});
            \end{align*}\\
        }
        Compute incremental updates:
        \begin{align*}
            \Delta \theta = \sum_{k=0}^{K-1} \xi_{t_k} \big[ J^{\theta}_{t_{k+1}} - J^{\theta}_{t_k} - q^{\psi}_{t_k} \Delta t_k \big],\quad \Delta \psi = \sum_{k=0}^{K-1} \zeta_{t_k} \big[ J^{\theta}_{t_{k+1}} - J^{\theta}_{t_k} - q^{\psi}_{t_k} \Delta t_k \big];
        \end{align*}\\
        Update $\theta$ and $\psi$:
        \begin{align*}
            \theta \gets \theta + l_j^{\theta} \Delta \theta, \quad \psi \gets \psi + l_j^\psi \Delta \psi;
        \end{align*}
    }
    \Return{$J^\theta$, $q^\psi$}
\end{algorithm}

\begin{figure}[ht]
    \centering
    \includegraphics[width=1.0\linewidth]{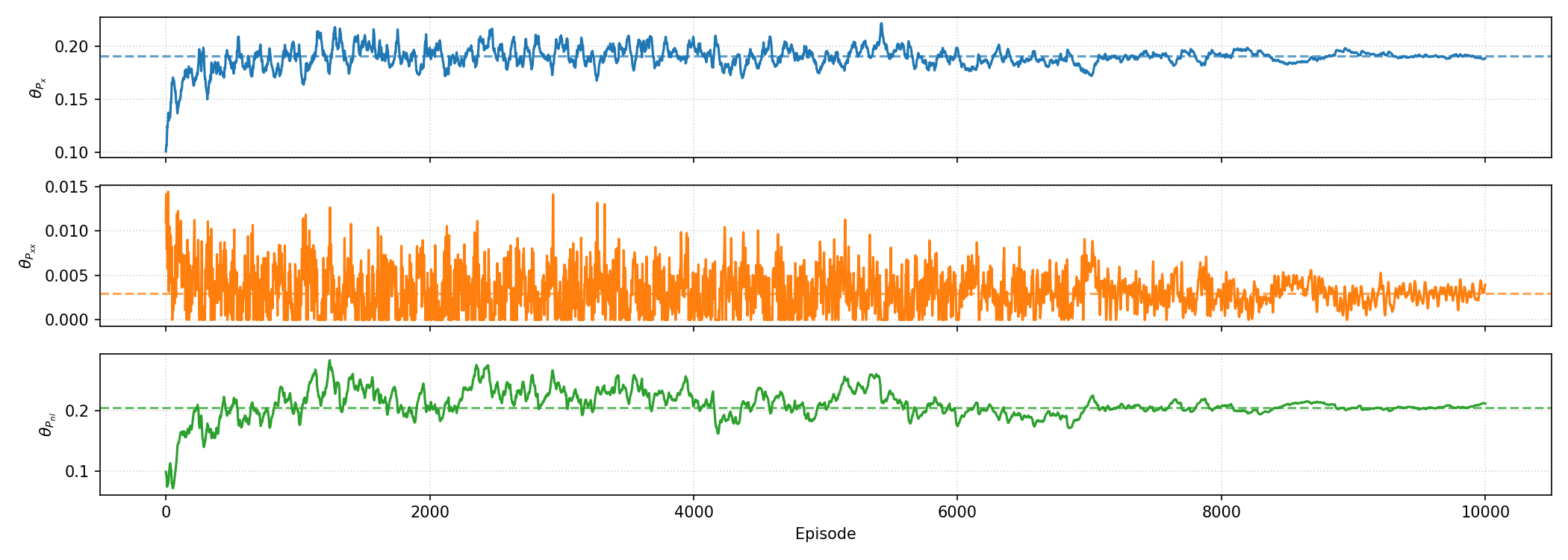}
    \includegraphics[width=1.0\linewidth]{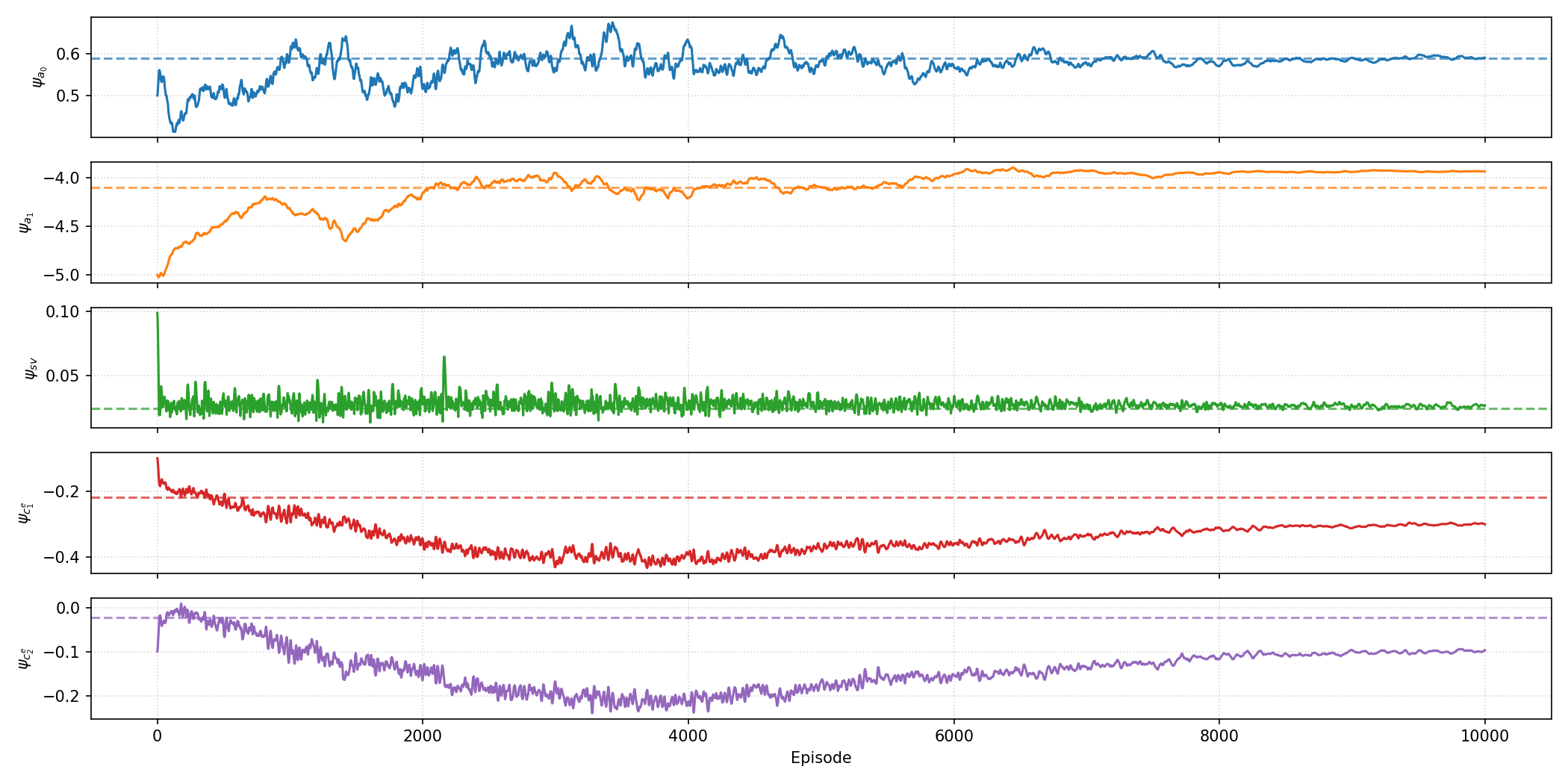}
    \caption{Convergence of model parameters. The first three are parameters of the value function $J^\theta$, and the last five are parameters of the q-function $q^\psi$.}
    \label{fig: experiment converge}
\end{figure}

\begin{figure}[ht]
    \centering
    \includegraphics[width=0.92\linewidth]{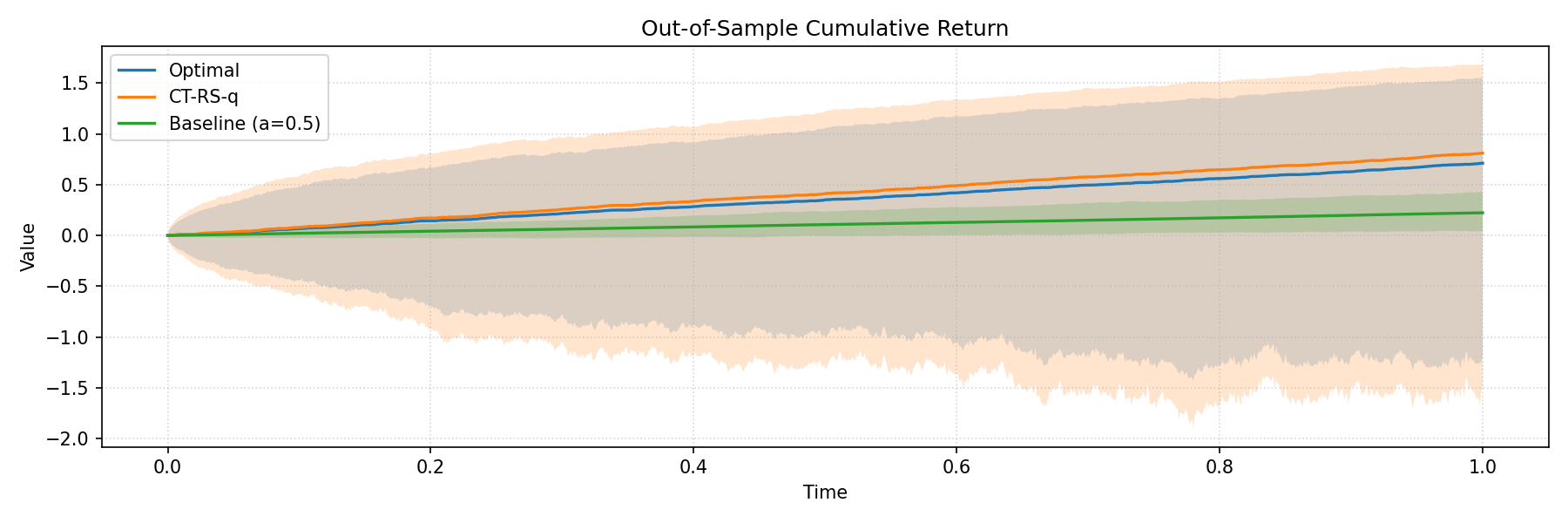}
    \includegraphics[width=0.92\linewidth]{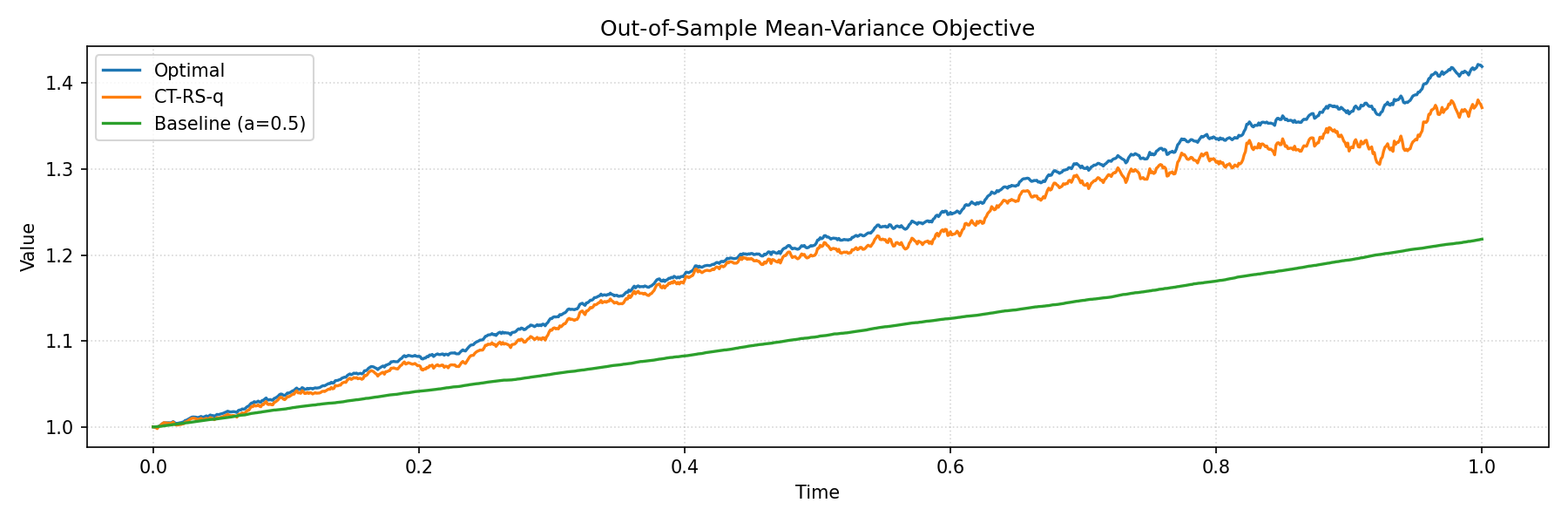}
    \caption{Curves of the cumulative return and the mean-variance objective for three policies.}
    \label{fig: experiment cum_ret mvo}
\end{figure}

\clearpage
\appendix

\section{Theory of Risk-Sensitive Q-Learning}
\label{app: theory rsql}

In this section, we provide a theoretical foundation of risk-sensitive q-learning and present more comprehensive results in addition to Section \ref{sec: rsql}.

\paragraph{Notation.} For the augmented SDE, we denote $\Pi_{\HD}$ and $\Pi_{\mkv}$ as the set of all history-dependent policies and all Markov policies, respectively. For a policy $\pi\colon \gX\times \gA \rightarrow [0,1]$ and a function $f\colon \gX\times \gA \rightarrow \R$, we use $f(\cdot, \pi) = \int_\gA f(\cdot, a) \pi(a\mid \cdot) \rd a$ to denote the function value averaged over the policy.

\subsection{Feynman-Kac formula and HJB equation}

Let $\gL_{\aug}^a$ be the infinitesimal generator associated with the diffusion process governed by \eqref{eq: augmented sde}:
\begin{equation}
    \begin{aligned}
        \gL_{\aug}^a f(t,x,b_0,b_1) &= \partial_t f(t,x,b_0,b_1) + \mu_{\aug}(t, x, b_0, b_1, a) \partial_x f(t,x,b_0,b_1) \\
        &\quad + \frac{1}{2}\sigma_{\aug}^2(t,x,b_0,b_1,a) \partial^2_{xx} f(t,x,b_0,b_1).
    \end{aligned}
\end{equation}
Recall that the augmented value function can be written as follows:
\begin{equation}
    \begin{aligned}
        J^\pi(t, x, b_0, b_1) &= \E \bigg[\varphi\big( B_{0,T}^\pi + B_{1,T}^\pi h(X_T^\pi) \big) \\
        &\qquad\quad - \tau \int_t^T B_{1,s}^\pi \log \pi(a_s^\pi\mid \gF_s^X ) \rd s \mid X_t^\pi=x, B_{0,t}^\pi=b_0, B_{1,t}^\pi=b_1 \bigg].
    \end{aligned}
\end{equation}

Given a Markov policy $\pi\in \Pi_{\mkv}$, its augmented value function $J^\pi$ satisfies the following PDE:
\begin{equation}
\label{eq: J feynman kac}
    \begin{aligned}
        \int_\gA \big[ \gL_{\aug}^a J^\pi( t,x,b_0,b_1) - \tau b_1 \log \pi(a\mid t,x,b_0,b_1) \big] & \pi(a\mid t,x,b_0,b_1) \rd a = 0, \\
        J^\pi(T,x,b_0,b_1) = \varphi(b_0 + b_1 h(x)) &,
    \end{aligned}
\end{equation}
which is the well-known Feynman-Kac formula in the exploratory RL setting.

On the other hand, the optimal value function $J^*$ satisfies the following HJB equation:
\begin{equation}
    \begin{aligned}
        \sup_{\pi\in \Pi_{\mkv}} \int_\gA \big[ \gL_{\aug}^a J^*( t,x,b_0,b_1) - \tau b_1 \log \pi(a\mid t,x,b_0,b_1) \big] &\pi(a\mid t,x,b_0,b_1) \rd a = 0, \\
        J^*(T,x,b_0,b_1) = \varphi(b_0 + b_1 h(x)) &,
    \end{aligned}
\end{equation}
Solving the above HJB equation yields the relationship between the optimal value function $J^*$ and the optimal policy $\pi^*$, which we state in the following proposition.

\begin{proposition}
    The optimal value function $J^*$ and the optimal policy $\pi^*$ satisfy the following relationship:
    \begin{align}
        \pi^*(a\mid t,x,b_0,b_1) = \exp\left\{ \frac{\gL_{\aug}^a J^*(t,x,b_0,b_1)}{\tau b_1} \right\},\quad \int_{\gA} \exp\left\{ \frac{\gL_{\aug}^a J^*(t,x,b_0,b_1)}{\tau b_1} \right\} \rd a = 1.
    \end{align}
\end{proposition}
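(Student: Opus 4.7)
The plan is to reduce the HJB equation to a pointwise constrained optimization over probability densities on the action space $\gA$, and then to derive both the formula for $\pi^*$ and the normalization identity simultaneously by invoking the Gibbs (max-entropy) variational principle. The whole argument is essentially the standard ``entropy-regularized pointwise HJB inversion,'' with the two extra bookkeeping facts that (i) $b_1>0$ along trajectories (since $B_{1,s}=b_1 e^{-\delta(s-t)}$ with $b_1\in\R_+$) so division by $\tau b_1$ is legitimate, and (ii) $\pi$ enters the HJB integrand only through the action $a$, so the outer supremum over $\Pi_{\mkv}$ collapses to a fiberwise one.

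First I would observe that for each fixed $(t,x,b_0,b_1)$ the HJB equation reads
\begin{equation*}
\sup_{p\in\Delta(\gA)} \int_{\gA} p(a)\,\bigl[\,\gL_{\aug}^{a} J^{*}(t,x,b_0,b_1) \;-\; \tau b_1\log p(a)\,\bigr]\,\rd a \;=\; 0,
\end{equation*}
where $\Delta(\gA)$ is the set of probability densities on $\gA$. Writing $g(a) := \gL_{\aug}^{a} J^{*}(t,x,b_0,b_1)/(\tau b_1)$ and dividing by $\tau b_1>0$, the inner problem is equivalent to
\begin{equation*}
\sup_{p\in\Delta(\gA)} \int_{\gA} p(a)\,\bigl[\,g(a)-\log p(a)\,\bigr]\,\rd a.
\end{equation*}

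Second I would solve this inner problem by the classical Gibbs variational argument: introducing a Lagrange multiplier $\lambda$ for the constraint $\int p=1$ and setting the functional derivative in $p$ to zero gives $g(a)-\log p(a)-1-\lambda/(\tau b_1)=0$, hence $p^{*}(a)\propto \exp\{g(a)\}$. Normalizing yields
\begin{equation*}
p^{*}(a) \;=\; \frac{\exp\{g(a)\}}{Z(t,x,b_0,b_1)}, \qquad Z(t,x,b_0,b_1):=\int_{\gA}\exp\{g(a')\}\,\rd a',
\end{equation*}
and a direct substitution shows that the maximum value of the inner objective equals $\log Z$.

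Third I would close the loop by plugging this maximum back into the HJB equation: $\tau b_1 \log Z(t,x,b_0,b_1)=0$, and since $\tau b_1>0$ this forces $Z\equiv 1$, which is precisely the claimed normalization $\int_{\gA}\exp\{\gL_{\aug}^{a} J^{*}/(\tau b_1)\}\,\rd a=1$. Feeding $Z=1$ back into the expression for $p^{*}$ then gives the un-normalized softmax form $\pi^{*}(a\mid t,x,b_0,b_1)=\exp\{\gL_{\aug}^{a} J^{*}(t,x,b_0,b_1)/(\tau b_1)\}$, completing the proof.

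The main obstacle is not the calculation but the \emph{verification step}: one must check that the pointwise Gibbs maximizer actually produces a bona fide Markov policy, i.e.\ that $a\mapsto \gL_{\aug}^{a} J^{*}$ is measurable, that $\int_{\gA}\exp\{g\}\,\rd a<\infty$ so the Gibbs density is well-defined, and that the resulting $\pi^{*}$ lies in $\Pi_{\mkv}$ and is actually attained (not just a supremum). Under the standard regularity of $\mu$, $\sigma$, $r$, $h$, $\varphi$ and the smoothness $J^{*}\in C^{1,2}$ assumed elsewhere, these points reduce to continuity of $g$ in $a$ and a measurable-selection argument, which I would invoke but not belabor.
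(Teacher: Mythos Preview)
Your proposal is correct and is precisely the ``solving the HJB equation'' that the paper alludes to but does not spell out: the paper simply states the proposition as a consequence of the HJB equation without giving details, and your Gibbs variational argument (pointwise maximization over $p\in\Delta(\gA)$, identification of the softmax optimizer, and reading off $Z=1$ from the zero right-hand side) is exactly the standard derivation behind that sentence.
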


\subsection{\textit{Lowercase} q-function}
\label{sec: lowercase q}

We next focus on deriving the q-function with respect to an augmented value function $J^{\pi}(t, x, b_0, b_1)$, analogous to Q-functions in the conventional RL setting. The concept of the \textit{lowercase} q-function was proposed in \cite{jia2023q}, where the authors provided a rigorous justification for a recent conjecture \citep{gao2022state,zhou2021curse} that the counterpart of Q-functions in continuous-time RL is the Hamiltonian of the dynamics. Below we extend the theory to the risk-sensitive scenario.

Given a Markov policy $\pi\in \Pi_{\mkv}$, a fixed action $a\in \gA$ and a small constant $\Delta t>0$, consider a perturbed policy as follows: it takes the action $a$ on $[t, t+\Delta t)$, and then follows $\pi$ on $[t+\Delta t, T]$. The cumulative reward under such a perturbed policy then becomes
\begin{align}
    Z^\pi_{\Delta t}(t,x, a) &= \int_t^{t+\Delta t} e^{-\delta (s-t)} r(s, X_s^{a}, a) \rd s
     + e^{-\delta \Delta t} Z^{\pi}(t+\Delta t, X_{t+\Delta t}^{a}),\quad X_t^a = x,
\end{align}
and we introduce the corresponding $\Delta t$-parameterized Q-function as
\begin{align}
    Q_{\Delta t}^{\pi}(t, x, b_0, b_1, a) = \E[ \varphi( b_0 + b_1 Z^\pi_{\Delta t}(t,x, a) )] + \tau b_1 e^{-\delta \Delta t} \E[\Ent(\pi; t+\Delta t, X_{t+\Delta t}^{\pi})].
\end{align}

Recall that in \eqref{eq: J0} and \eqref{eq: J} an entropy term is included to incentivize exploration using stochastic policies. However, in defining $Q_{\Delta t}^{\pi}(t,x,b_0,b_1,a)$ we exclude the policy's entropy on the interval $[t, t+\Delta t)$, because a deterministic constant action $a$ is applied whose entropy is always zero.

It is obvious that when $\Delta t \rightarrow 0$, the $\Delta t$-parameterized Q-function converges to the value function $J^{\pi}(t,x,b_0,b_1)$. However, the first-order term of $\Delta t$ crucially reflects the advantage of the action $a$ over the current policy $\pi$, which shares the same meaning as Q-functions in discrete-time RL. Following previous works \citep{jia2023q}, we define such a first-order term as the q-function. To proceed, we define the infinitesimal generator of the SDE \eqref{eq: sde} as $\gL^a$:
\begin{align}
    \gL^a f(t,x) = \partial_t f(t,x) + \mu(t, x, a) \partial_x f(t,x) + \frac{1}{2}\sigma^2(t,x,a) \partial^2_{xx} f(t,x).
\end{align}

\begin{definition}
\label{def: q-function}
    The \textit{q-function} of the augmented SDE \eqref{eq: augmented sde}-\eqref{eq: J} associated with a policy $\pi\in \Pi_{\mkv}$ is defined as follows:
    \begin{align}
    \label{eq: q-function}
        q^{\pi}(t, x, b_0, b_1, a) &= \{ \gL^a J^{\pi} + b_1 r(t,x,a) \partial_{b_0} J^\pi - b_1 \delta \partial_{b_1} J^\pi \}(t, x, b_0, b_1).
    \end{align}
\end{definition}

Note that any q-function $q^\pi$ is related with the value function $J^\pi$ of the same policy. In addition, we define the optimal q-function as $q^*(t,x,b_0,b_1,a) = q^{\pi^*}(t,x,b_0,b_1,a)$. Below we present some important properties of the q-function.

\begin{proposition}
\label{prop: q-function}
    The $\Delta t$-parameterized Q-function $Q_{\Delta t}^{\pi}$ satisfies that
    \begin{align}
        Q^{\pi}_{\Delta t}(t, x, b_0, b_1, a) = J^{\pi}(t, x, b_0, b_1) + q^{\pi}(t, x, b_0, b_1, a) \Delta t + o(\Delta t).
    \end{align}
\end{proposition}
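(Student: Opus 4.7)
The plan is to combine a one-step dynamic-programming rewrite of $Q^\pi_{\Delta t}$ with Dynkin's formula applied to $J^\pi$ along the augmented diffusion driven by the constant action $a$ on $[t, t+\Delta t)$, and then recognize the first-order term as the augmented generator, which is precisely $q^\pi$ by Definition \ref{def: q-function}.

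First I would exploit the algebraic identity $b_0 + b_1 Z^\pi_{\Delta t}(t,x,a) = B^a_{0, t+\Delta t} + B^a_{1, t+\Delta t} Z^\pi(t+\Delta t, X^a_{t+\Delta t})$, which follows immediately from the explicit formulas for $B^a_{0,s}$ and $B^a_{1,s}$ along the augmented SDE started at $(x, b_0, b_1)$ with constant action $a$. Conditioning on $\gF^X_{t+\Delta t}$ and using that $\pi$ is Markov on $[t+\Delta t, T]$, together with the definition of $J^\pi$ in \eqref{eq: J}, I obtain
\begin{align*}
\E\bigl[\varphi(b_0 + b_1 Z^\pi_{\Delta t})\bigr] = \E\bigl[ J^\pi(t+\Delta t, X^a_{t+\Delta t}, B^a_{0, t+\Delta t}, B^a_{1, t+\Delta t}) \bigr] - \tau b_1 e^{-\delta \Delta t}\, \E\bigl[ \Ent(\pi; t+\Delta t, X^a_{t+\Delta t}) \bigr],
\end{align*}
where I used that $B^a_{1, t+\Delta t} = b_1 e^{-\delta \Delta t}$ is deterministic. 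Since $X^\pi_{t+\Delta t}$ in the definition of $Q^\pi_{\Delta t}$ is the state of the perturbed policy at $t+\Delta t$ and therefore coincides with $X^a_{t+\Delta t}$, the explicit entropy term in the definition of $Q^\pi_{\Delta t}$ exactly cancels the negative entropy above, yielding the clean identity $Q^\pi_{\Delta t}(t, x, b_0, b_1, a) = \E[J^\pi(t+\Delta t, X^a_{t+\Delta t}, B^a_{0, t+\Delta t}, B^a_{1, t+\Delta t})]$.

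Next I would apply Itô's formula to $J^\pi$ (which is $C^{1,2}$ by the same regularity used in Theorem \ref{thm: J q martingale}) along the augmented SDE with constant action $a$. Taking expectation kills the stochastic integral and gives
\begin{align*}
\E\bigl[ J^\pi(t+\Delta t, X^a_{t+\Delta t}, B^a_{0, t+\Delta t}, B^a_{1, t+\Delta t}) \bigr] - J^\pi(t, x, b_0, b_1) = \E \int_t^{t+\Delta t} \gL_{\aug}^a J^\pi(s, X^a_s, B^a_{0,s}, B^a_{1,s}) \,\rd s,
\end{align*}
where the augmented generator reads $\gL_{\aug}^a J^\pi = \gL^a J^\pi + B^a_{1,s}\, r(s, X^a_s, a)\, \partial_{b_0} J^\pi - \delta B^a_{1,s}\, \partial_{b_1} J^\pi$; no second-order $(b_0, b_1)$ terms appear since those coordinates have zero diffusion. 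Continuity of $\gL_{\aug}^a J^\pi$, right-continuity of the augmented state at $s = t$, and the initial conditions $B^a_{0,t} = b_0$, $B^a_{1,t} = b_1$ then imply that the integrand converges to $\gL_{\aug}^a J^\pi(t, x, b_0, b_1)$ almost surely as $s \downarrow t$, and dominated convergence yields $\int_t^{t+\Delta t} \gL_{\aug}^a J^\pi \,\rd s = \gL_{\aug}^a J^\pi(t, x, b_0, b_1)\,\Delta t + o(\Delta t)$. Identifying this coefficient with $q^\pi(t, x, b_0, b_1, a)$ via Definition \ref{def: q-function} closes the expansion.

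The main obstacle is the entropy bookkeeping in the dynamic-programming step: one must carefully isolate the future entropy contribution, which appears both inside $J^\pi$ at $t+\Delta t$ and in the explicit entropy term defining $Q^\pi_{\Delta t}$, and verify that the two cancel because $B^a_{1, t+\Delta t}$ is deterministic. The rest is routine, provided one assumes standard Lipschitz and growth conditions on $(\mu, \sigma, r, h, \varphi)$ that guarantee $J^\pi \in C^{1,2}$ and supply moment bounds on the augmented diffusion needed for the $o(\Delta t)$ remainder.
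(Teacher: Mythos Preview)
The paper does not actually supply a proof of Proposition~\ref{prop: q-function}; it is stated in Appendix~\ref{app: theory rsql} as a basic property of the $q$-function, with the implicit understanding that it carries over from the risk-neutral argument in \cite{jia2023q} once one works with the augmented state. Your proposal fills in exactly this omitted argument and does so correctly: the dynamic-programming identity $Q^\pi_{\Delta t}=\E[J^\pi(t+\Delta t, X^a_{t+\Delta t}, B^a_{0,t+\Delta t}, B^a_{1,t+\Delta t})]$ is the key step, and your entropy bookkeeping (using that $B^a_{1,t+\Delta t}=b_1 e^{-\delta\Delta t}$ is deterministic so the explicit entropy term in $Q^\pi_{\Delta t}$ cancels the one hidden in $J^\pi$) is precisely what makes the augmented picture clean. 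The subsequent Dynkin step and identification of $\gL_{\aug}^a J^\pi$ with $q^\pi$ via Definition~\ref{def: q-function} (equivalently Proposition~\ref{prop: equiv q L_aug}) is routine, so there is nothing to compare---your route is the natural one the paper tacitly relies on.
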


\begin{proposition}
\label{prop: q-function mean}
    The q-function $q^\pi$ satisfies that
    \begin{align}
        \int_\gA \{ q^\pi(t,x,b_0,b_1,a) - \tau b_1  \log \pi(a\mid t,x,b_0,b_1) \} \pi(a\mid t,x,b_0,b_1)\rd a = 0.
    \end{align}
\end{proposition}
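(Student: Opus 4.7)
The plan is to recognize that the q-function from Definition \ref{def: q-function} is precisely the infinitesimal generator of the \emph{augmented} SDE applied to the augmented value function $J^\pi$, and then invoke the exploratory Feynman-Kac PDE (\ref{eq: J feynman kac}) to conclude.

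First I would expand $\gL_{\aug}^a J^\pi(t,x,b_0,b_1)$ directly from the definition of the infinitesimal generator together with the explicit forms of $\mu_{\aug}$ and $\sigma_{\aug}$ in (\ref{eq: augmented sde}). Since the diffusion coefficients corresponding to the auxiliary states $B_0$ and $B_1$ vanish, only the drift terms $b_1 r(t,x,a)$ and $-\delta b_1$ in the $b_0$- and $b_1$-directions contribute beyond the operator $\gL^a$ acting on the $(t,x)$-variables. Concretely,
\begin{equation*}
    \gL_{\aug}^a J^\pi(t,x,b_0,b_1) = \gL^a J^\pi(t,x,b_0,b_1) + b_1 r(t,x,a)\,\partial_{b_0} J^\pi(t,x,b_0,b_1) - b_1 \delta\, \partial_{b_1} J^\pi(t,x,b_0,b_1),
\end{equation*}
which matches the right-hand side of (\ref{eq: q-function}). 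Hence $q^\pi(t,x,b_0,b_1,a) = \gL_{\aug}^a J^\pi(t,x,b_0,b_1)$ pointwise in $a$.

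With this identification, the integrand in the claim is exactly the one appearing in the Feynman-Kac PDE (\ref{eq: J feynman kac}) for the augmented value function under the Markov policy $\pi$. Integrating against $\pi(a\mid t,x,b_0,b_1)$ therefore yields zero, which is the desired identity.

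The only potential subtlety is to make sure the operator $\gL^a$ as written acts only through $(t,x)$ while treating $(b_0,b_1)$ as frozen parameters; this is consistent with Definition \ref{def: q-function}, where the auxiliary $b_0, b_1$ dependence is explicitly carried by the additional partial-derivative terms. Beyond this bookkeeping, there is no real obstacle: the proposition is essentially a restatement of the exploratory Feynman-Kac equation under the change of notation introduced by the q-function.
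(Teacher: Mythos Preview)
Your proposal is correct and matches the paper's intended argument: the identity $q^\pi=\gL_{\aug}^a J^\pi$ you derive is exactly Proposition~\ref{prop: equiv q L_aug}, and substituting it into the exploratory Feynman--Kac equation~\eqref{eq: J feynman kac} gives the claim directly. The paper does not spell out a separate proof for this proposition, but your route through Proposition~\ref{prop: equiv q L_aug} and \eqref{eq: J feynman kac} is precisely the one the surrounding text sets up.
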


\begin{proposition}
\label{prop: equiv q L_aug}
    The q-function $q^\pi$ and the value function $J^\pi$ satisfy that
    \begin{align}
        q^\pi(t,x,b_0,b_1,a) = \gL_{\aug}^a J^\pi(t,x,b_0,b_1).
    \end{align}
\end{proposition}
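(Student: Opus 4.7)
The plan is to establish the identity by a direct coordinate-wise expansion of the augmented generator $\gL_{\aug}^a$ and then match it against Definition~\ref{def: q-function}. The argument is almost entirely algebraic; it relies only on the explicit form of the drift and diffusion coefficients of the augmented SDE \eqref{eq: augmented sde}, together with a careful reading of the compressed notation in which $\gL_{\aug}^a$ was introduced at the beginning of this appendix.

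First, I would unfold $\gL_{\aug}^a$ acting on a generic $C^{1,2}$ function $f(t,x,b_0,b_1)$, reading $\mu_{\aug} \partial_x$ and $\sigma_{\aug}^2 \partial_{xx}^2$ as the full inner product with the spatial gradient and the trace of the Hessian against the (degenerate) diffusion matrix over the augmented spatial variable $(x,b_0,b_1)$. The augmented drift $\mu_{\aug}$ then produces three first-order terms, one per coordinate: $\mu(t,x,a)\,\partial_x f$ from the $x$-component, $b_1 r(t,x,a)\,\partial_{b_0} f$ from the $b_0$-component, and $-\delta b_1 \partial_{b_1} f$ from the $b_1$-component. Because the $b_0$- and $b_1$-rows of $\sigma_{\aug}$ vanish, the only second-order contribution is $\tfrac{1}{2}\sigma^2(t,x,a)\,\partial_{xx}^2 f$. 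Collecting these yields
\begin{align*}
\gL_{\aug}^a f = \partial_t f + \mu(t,x,a)\,\partial_x f + \tfrac{1}{2}\sigma^2(t,x,a)\,\partial_{xx}^2 f + b_1 r(t,x,a)\,\partial_{b_0} f - \delta b_1\,\partial_{b_1} f.
\end{align*}

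Next, I would recognize that the first three terms on the right-hand side are exactly the unaugmented generator $\gL^a f$ defined just above Definition~\ref{def: q-function}, with $(b_0,b_1)$ treated as passive parameters. Substituting $f = J^\pi$ and regrouping gives
\begin{align*}
\gL_{\aug}^a J^\pi(t,x,b_0,b_1) = \gL^a J^\pi(t,x,b_0,b_1) + b_1 r(t,x,a)\,\partial_{b_0} J^\pi - b_1 \delta\,\partial_{b_1} J^\pi,
\end{align*}
which is precisely the expression defining $q^\pi$ in \eqref{eq: q-function}.

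Since the argument reduces to a direct comparison of two explicit formulas, there is no genuine obstacle; the only mild issue is notational, namely taking care to unpack the compressed form of $\gL_{\aug}^a$ so that the $b_0$- and $b_1$-drift contributions appear correctly and no spurious second-order terms arise from the zero rows of $\sigma_{\aug}$. As a sanity check, one could cross-validate against Proposition~\ref{prop: q-function}: an Itô expansion of $J^\pi$ along the augmented state process, kept to first order in $\Delta t$, produces a coefficient equal to $\gL_{\aug}^a J^\pi$, which by Proposition~\ref{prop: q-function} must equal $q^\pi$ under any Markov policy, corroborating the identity.
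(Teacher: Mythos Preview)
Your proposal is correct. The paper does not include an explicit proof of this proposition, but your direct coordinate-wise expansion of $\gL_{\aug}^a$---reading the compressed $\mu_{\aug}\partial_x$ and $\tfrac{1}{2}\sigma_{\aug}^2\partial_{xx}^2$ as the full gradient and Hessian-trace over the augmented state $(x,b_0,b_1)$, then matching against Definition~\ref{def: q-function}---is exactly the intended verification and is the only natural route for what is essentially a definitional identity.
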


\subsection{Martingale characterization}

The definition of the q-function enables us to design various kinds of martingale characterization of the value function, based on different algorithmic requirements. Specifically, we present below the martingale characterization theorems for: (i) on-policy policy evaluation in Theorem \ref{thm: pe J q martingale}, (ii) off-policy policy evaluation in Theorems \ref{thm: q pi pi' martingale} and \ref{thm: J q pi pi' martingale}, and (iii) off-policy policy optimization in Theorem \ref{thm: optimal q function policy}. Their proofs are similar to that of Theorem \ref{thm: J q martingale}, so we omit them for brevity. As before, we define the discounted cumulative reward up to time $s$ as $Y_s^\pi$:
\begin{align}
    Y_s^\pi = \int_t^s e^{-\delta(u-t)} r(u, X_u^\pi, a_u^\pi) \rd u, \quad s\in [t,T].
\end{align}

\begin{theorem}
\label{thm: pe J q martingale}
    Let a policy $\pi\in \Pi_{\mkv}$, its corresponding value function $J^{\pi}$ and q-function $q^{\pi}$, a function $\hat{J} \in C^{1,2}([0,T) \times \R^d \times \R \times \R_+) \cap C([0,T] \times \R^d \times \R \times \R_+)$ with polynomial growth, and a continuous function $\hat{q}\colon [0, T] \times \R^d \times \R \times \R_+ \times \gA \rightarrow \R$ be given such that for any $(t,x,b_0,b_1) \in [0,T] \times \R^d \times \R \times \R_+$,
    \begin{align}
    \label{eq: pe constraints}
        \hat{J}(T, x, b_0, b_1) = \varphi(b_0 + b_1 h(x)),\quad \hat{q}(t, x, b_0, b_1, \pi) + \tau b_1 \Ent(\pi(\cdot\mid t, x, b_0, b_1)) = 0.
    \end{align}
    Then:
    \begin{enumerate}[(i)]
        \item $\hat{q} = q^{\pi}$ if and only if for any $(t, x) \in [0, T] \times \R^d$, the following process is an $(\gF, \PB)$-martingale:
        \begin{align}
            J^{\pi}(s, X_s^{\pi}, Y_s^{\pi}, e^{-\delta (s-t)}) - \int_t^s \hat{q}(u, X_u^{\pi}, Y_u^{\pi}, e^{-\delta (u-t)}, a_u^{\pi}) \rd u, \quad s\in [t,T].
        \end{align}
        \item $\hat{J} = J^{\pi}$ and $\hat{q} = q^{\pi}$ respectively, if and only if for any $(t,x) \in [0,T] \times \R^d$, the following process is an $(\gF, \PB)$-martingale:
        \begin{align}
        \label{eq: pe j q martingale}
            \hat{J}(s, X_s^{\pi}, Y_s^{\pi}, e^{-\delta (s-t)}) - \int_t^s \hat{q}(u, X_u^{\pi}, Y_u^{\pi}, e^{-\delta (u-t)}, a_u^{\pi}) \rd u, \quad s\in [t,T].
        \end{align}
    \end{enumerate}
\end{theorem}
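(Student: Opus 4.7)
The plan is to apply Itô's formula to $\hat{J}$ along the augmented diffusion and identify the resulting drift with $\hat{q}$. The key observation is that the trajectory $(X_s^\pi, Y_s^\pi, e^{-\delta(s-t)})$ is precisely the augmented state $(X_s^\pi, B_{0,s}^\pi, B_{1,s}^\pi)$ from \eqref{eq: augmented sde} initialized at $(x, 0, 1)$ at time $t$, since $B_{1,s}^\pi$ solves $\rd B_{1,s} = -\delta B_{1,s}\rd s$ and $B_{0,s}^\pi$ integrates $B_{1,s} r$. Applying Itô's formula and noting that the last two components are of bounded variation, the drift collapses to $\gL_{\aug}^{a_s^\pi} \hat{J}(s, X_s^\pi, Y_s^\pi, e^{-\delta(s-t)})\,\rd s$ plus a stochastic integral driven by $\sigma\,\partial_x \hat{J}\cdot \rd W_s$. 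Polynomial growth of $\hat{J}$ and standard SDE moment bounds promote this local martingale to a true martingale.

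For part (i), substitute the fixed $\hat{J} = J^\pi$. By Proposition \ref{prop: equiv q L_aug} the drift becomes $q^\pi(s, X_s^\pi, Y_s^\pi, e^{-\delta(s-t)}, a_s^\pi)\,\rd s$, so the process in the theorem is a martingale if and only if $\E\!\int_t^T [q^\pi - \hat{q}](u, X_u^\pi, Y_u^\pi, e^{-\delta(u-t)}, a_u^\pi)\,\rd u = 0$ for every initial $(t,x)$. Continuity of $q^\pi - \hat{q}$, the freedom in the initial condition, and the support of the controlled diffusion then lift this to the pointwise identity $\hat{q} \equiv q^\pi$ on the reachable domain.

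For part (ii), the forward direction follows from the same drift identification: if $\hat{J} = J^\pi$ and $\hat{q} = q^\pi$, then Proposition \ref{prop: equiv q L_aug} gives $\gL_{\aug}^{a} \hat{J} - \hat{q} = 0$ along the trajectory, while the constraints \eqref{eq: pe constraints} hold by the definition of $J^\pi$ at $s = T$ and by Proposition \ref{prop: q-function mean}. Conversely, suppose the process is a martingale. Itô's drift identification yields the pointwise equality $\gL_{\aug}^a \hat{J}(t,x,b_0,b_1) = \hat{q}(t,x,b_0,b_1,a)$ on the support. Integrating against $\pi(a\mid t,x,b_0,b_1)$ and using the second part of \eqref{eq: pe constraints} gives
\begin{align*}
    \int_{\gA} \big[ \gL_{\aug}^a \hat{J}(t,x,b_0,b_1) - \tau b_1 \log \pi(a\mid t,x,b_0,b_1) \big] \pi(a\mid t,x,b_0,b_1)\,\rd a = 0,
\end{align*}
which together with $\hat{J}(T,x,b_0,b_1) = \varphi(b_0 + b_1 h(x))$ is exactly the Feynman--Kac PDE \eqref{eq: J feynman kac} satisfied by $J^\pi$. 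Uniqueness within the polynomial-growth class forces $\hat{J} = J^\pi$, and then $\hat{q} = \gL_{\aug}^a J^\pi = q^\pi$ by Proposition \ref{prop: equiv q L_aug}.

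The main obstacle I anticipate is the \emph{functional-to-pointwise} step: the martingale identity only tests $\gL_{\aug}^a \hat{J} - \hat{q}$ against the reachable set of the controlled diffusion, so a support/reachability argument (non-degeneracy of $\sigma$ together with continuity of $\hat{q}$ and of the coefficients defining $\gL_{\aug}^a \hat{J}$) is needed to extend the equality to all of $[0,T]\times \R^d \times \R \times \R_+ \times \gA$. The remaining technicalities---upgrading the stochastic integral to a true martingale via the polynomial-growth hypothesis, and invoking Feynman--Kac uniqueness in the same class---are standard but must be checked under the exact regularity assumed on $\hat{J}$ and $\hat{q}$.
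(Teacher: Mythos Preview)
Your proposal is correct and follows essentially the same Itô-based route the paper intends: the forward direction is exactly the paper's argument (apply Itô's formula to $\hat{J}$ along the augmented diffusion so that the drift is $\gL_{\aug}^{a}\hat{J}$, identify it with $q^\pi$ via Proposition~\ref{prop: equiv q L_aug}, and observe the remainder is a stochastic integral), and for the converse you correctly reduce to the Feynman--Kac PDE~\eqref{eq: J feynman kac} and invoke uniqueness in the polynomial-growth class. The paper omits the proof of this theorem as ``similar to Theorem~\ref{thm: J q martingale}'', and your direct Feynman--Kac argument is precisely the natural analogue once the fixed-point step (specific to the optimal case) is stripped away; the support/reachability caveat you flag is the only genuine technical point to verify.
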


\begin{theorem}
\label{thm: q pi pi' martingale}
    Let a policy $\pi\in \Pi_{\mkv}$, its corresponding value function $J^{\pi}$ and q-function $q^{\pi}$, and a continuous function $\hat{q}\colon [0, T] \times \R^d \times \R \times \R_+ \times \gA \rightarrow \R$ be given. Then:
    \begin{enumerate}[(i)]
        \item If $\hat{q} = q^{\pi}$, then for any $\pi'\in \Pi_{\mkv}$ and any $(t, x) \in [0, T] \times \R^d$, the following process is an $(\gF, \PB)$-martingale:
        \begin{align}
        \label{eq: q pi pi' martingale}
            J^{\pi}(s, X_s^{\pi'}, Y_s^{\pi'}, e^{-\delta (s-t)}) - \int_t^s \hat{q}(u, X_u^{\pi'}, Y_u^{\pi'}, e^{-\delta (u-t)}, a_u^{\pi'}) \rd u, \quad s\in [t,T].
        \end{align}
        \item If there exists $\pi'\in \Pi_{\mkv}$ such that \eqref{eq: q pi pi' martingale} is an $(\gF, \PB)$-martingale for any $(t, x) \in [0, T] \times \R^d$, then $\hat{q} = q^{\pi}$.
    \end{enumerate}
\end{theorem}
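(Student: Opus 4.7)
The plan is to reduce both directions to a single Itô decomposition of $J^\pi$ along the $\pi'$-trajectory, combined with Proposition~\ref{prop: equiv q L_aug}, which already identifies $q^\pi(t,x,b_0,b_1,a)$ with $\gL_{\aug}^a J^\pi(t,x,b_0,b_1)$. First I would apply Itô's formula to $s \mapsto J^\pi(s, X_s^{\pi'}, Y_s^{\pi'}, e^{-\delta(s-t)})$. Since $Y_s^{\pi'}$ and $e^{-\delta(s-t)}$ are absolutely continuous with $\rd Y_s^{\pi'} = e^{-\delta(s-t)} r(s, X_s^{\pi'}, a_s^{\pi'}) \rd s$ and $\rd e^{-\delta(s-t)} = -\delta e^{-\delta(s-t)} \rd s$, both of finite variation with vanishing quadratic covariation against $W$, the drift collects exactly to $\gL_{\aug}^{a_s^{\pi'}} J^\pi$ evaluated at the current augmented state, which by Proposition~\ref{prop: equiv q L_aug} equals $q^\pi(s, X_s^{\pi'}, Y_s^{\pi'}, e^{-\delta(s-t)}, a_s^{\pi'})$. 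The martingale part is $\partial_x J^\pi \cdot \sigma(s, X_s^{\pi'}, a_s^{\pi'}) \rd W_s$, which is a true $(\gF, \PB)$-martingale under the standard polynomial-growth assumptions on $J^\pi$ inherited from $\varphi$ and $h$.

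For direction (i), substituting $\hat{q} = q^\pi$ into \eqref{eq: q pi pi' martingale} cancels the drift against the integrand, so the process reduces to $J^\pi(t, x, 0, 1) + \int_t^s \partial_x J^\pi \cdot \sigma \, \rd W_u$, which is manifestly a martingale. For direction (ii), the same decomposition shows that \eqref{eq: q pi pi' martingale} equals
\begin{align*}
    J^\pi(t, x, 0, 1) + \int_t^s \left[ q^\pi - \hat{q} \right]\!\left(u, X_u^{\pi'}, Y_u^{\pi'}, e^{-\delta(u-t)}, a_u^{\pi'}\right) \rd u + \int_t^s \partial_x J^\pi \cdot \sigma \, \rd W_u.
\end{align*}
If the left-hand side is a martingale, subtracting off the stochastic integral (already a martingale) forces the continuous finite-variation middle term to be a martingale as well, hence identically zero. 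Differentiating in $s$ and using joint continuity yields $q^\pi(u, \cdot, a_u^{\pi'}) = \hat{q}(u, \cdot, a_u^{\pi'})$ along every $\pi'$-trajectory started from every $(t, x)$.

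The main obstacle is lifting this trajectory-wise identity to a pointwise identity of $\hat{q}$ and $q^\pi$ on the full domain $[0, T] \times \R^d \times \R \times \R_+ \times \gA$. The natural initialization used in \eqref{eq: q pi pi' martingale} starts the augmented state at $(b_0, b_1) = (0, 1)$, so the reachable set is confined to the slice $\{b_1 = e^{-\delta(s-t)},\, b_0 = Y_s^{\pi'}\}$; covering the full domain requires both non-degeneracy of $\sigma$ and full support of $\pi'$ on $\gA$ so that, as $(t, x, s)$ vary, the joint law of $(X_s^{\pi'}, Y_s^{\pi'}, a_s^{\pi'})$ has dense support, after which continuity of $q^\pi$ and $\hat{q}$ promotes the equality to the entire relevant domain. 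The remaining verifications (integrability of the stochastic integral, absence of boundary issues at $b_1 = 0$) are routine under the polynomial-growth hypotheses carried over from Theorem~\ref{thm: pe J q martingale}.
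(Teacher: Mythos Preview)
Your approach via It\^o's formula together with Proposition~\ref{prop: equiv q L_aug} is exactly the route the paper intends: the paper omits the proof of this theorem, stating only that it is similar to that of Theorem~\ref{thm: J q martingale}, whose forward direction is precisely the It\^o decomposition you describe. Your treatment of part~(ii) via the finite-variation-martingale argument is the natural complement, and your explicit discussion of the support/density obstacle for lifting the trajectory-wise identity to a pointwise one is in fact more careful than what the paper provides.
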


\begin{theorem}
\label{thm: J q pi pi' martingale}
    Let a policy $\pi\in \Pi_{\mkv}$, its corresponding value function $J^{\pi}$ and q-function $q^{\pi}$, a function $\hat{J} \in C^{1,2}([0,T) \times \R^d \times \R \times \R_+) \cap C([0,T] \times \R^d \times \R \times \R_+)$ with polynomial growth, and a continuous function $\hat{q}\colon [0, T] \times \R^d \times \R \times \R_+ \times \gA \rightarrow \R$ be given such that for any $(t,x,b_0,b_1) \in [0,T] \times \R^d \times \R \times \R_+$,
    \begin{align}
    \label{eq: J q pi pi' constraint}
        \hat{J}(T, x, b_0, b_1) = \varphi(b_0 + b_1 h(x)),\quad \hat{q}(t, x, b_0, b_1, \pi) + \tau b_1 \Ent(\pi(\cdot\mid t, x, b_0, b_1)) = 0.
    \end{align}
    Then:
    \begin{enumerate}[(i)]
        \item If $\hat{J} = J^{\pi}$ and $\hat{q} = q^{\pi}$ respectively, then for any $\pi'\in \Pi_{\mkv}$ and any $(t, x) \in [0, T] \times \R^d$, the following process is an $(\gF, \PB)$-martingale:
        \begin{align}
        \label{eq: J q pi pi' martingale}
            \hat{J}(s, X_s^{\pi'}, Y_s^{\pi'}, e^{-\delta (s-t)}) - \int_t^s \hat{q}(u, X_u^{\pi'}, Y_u^{\pi'}, e^{-\delta (u-t)}, a_u^{\pi'}) \rd u, \quad s\in [t,T].
        \end{align}
        \item If there exists $\pi'\in \Pi_{\mkv}$ such that \eqref{eq: J q pi pi' martingale} is an $(\gF, \PB)$-martingale for any $(t, x) \in [0, T] \times \R^d$, then $\hat{J} = J^{\pi}$ and $\hat{q} = q^{\pi}$ respectively.
    \end{enumerate}
\end{theorem}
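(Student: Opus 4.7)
The plan is to prove both directions via Itô's formula applied to $\hat{J}(s, X_s^{\pi'}, Y_s^{\pi'}, e^{-\delta(s-t)})$, using Proposition \ref{prop: equiv q L_aug} (which identifies $q^\pi = \gL_{\aug}^a J^\pi$) as the bridge between the analytic and probabilistic sides. With the initial condition $(b_0, b_1) = (0, 1)$ at time $t$, the augmented process of \eqref{eq: augmented sde} driven by $\pi'$ satisfies $B_{0,s}^{\pi'} = Y_s^{\pi'}$ and $B_{1,s}^{\pi'} = e^{-\delta(s-t)}$, matching the arguments that appear in \eqref{eq: J q pi pi' martingale}.

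For part (i), assume $\hat{J} = J^\pi$ and $\hat{q} = q^\pi$. Applying Itô's formula to $J^\pi$ along the $\pi'$-dynamics yields
\begin{align*}
    \rd J^\pi(s, X_s^{\pi'}, Y_s^{\pi'}, e^{-\delta(s-t)}) = \gL_{\aug}^{a_s^{\pi'}} J^\pi(s, X_s^{\pi'}, Y_s^{\pi'}, e^{-\delta(s-t)}) \rd s + (\partial_x J^\pi)^\top \sigma(s, X_s^{\pi'}, a_s^{\pi'}) \rd W_s,
\end{align*}
since the drifts of $Y_s^{\pi'}$ and $e^{-\delta(s-t)}$ contribute exactly the $B_0$- and $B_1$-components of $\gL_{\aug}^{a_s^{\pi'}}$. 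By Proposition \ref{prop: equiv q L_aug} the drift equals $q^\pi(s, \ldots, a_s^{\pi'}) = \hat{q}(s, \ldots, a_s^{\pi'})$, so subtracting $\int_t^s \hat{q}(u, \ldots, a_u^{\pi'}) \rd u$ leaves only the Itô stochastic integral, which is a true (not merely local) martingale under the polynomial-growth hypothesis on $J^\pi$ combined with standard moment estimates on $X_s^{\pi'}$ and $Y_s^{\pi'}$.

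For part (ii), suppose \eqref{eq: J q pi pi' martingale} is an $(\gF, \PB)$-martingale for some $\pi' \in \Pi_{\mkv}$ and every $(t, x)$. Applying Itô's formula to the candidate $\hat{J}$ in the same way, the stated process decomposes as $\hat{J}(t, x, 0, 1) + \int_t^s [\gL_{\aug}^{a_u^{\pi'}}\hat{J} - \hat{q}(\cdot, a_u^{\pi'})] \rd u + (\text{Itô integral})$. Since the whole process is a martingale and the Itô integral is a local martingale, the continuous finite-variation middle piece is both a local martingale and of bounded variation, hence identically zero; so its integrand vanishes almost surely along the $\pi'$-trajectory. Varying the initial pair $(t, x)$ and invoking continuity of $\hat{q}$ together with smoothness of $\hat{J}$ promotes this to the pointwise identity $\gL_{\aug}^a \hat{J}(s, x, b_0, b_1) = \hat{q}(s, x, b_0, b_1, a)$. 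Averaging against $\pi(\cdot\mid s, x, b_0, b_1)$ and applying the constraint \eqref{eq: J q pi pi' constraint} gives
\begin{align*}
    \int_\gA \big[\gL_{\aug}^a \hat{J}(s, x, b_0, b_1) - \tau b_1 \log \pi(a\mid s, x, b_0, b_1)\big] \pi(a\mid s, x, b_0, b_1) \rd a = 0,
\end{align*}
which, together with $\hat{J}(T, x, b_0, b_1) = \varphi(b_0 + b_1 h(x))$, is precisely the Feynman--Kac formula \eqref{eq: J feynman kac} characterizing $J^\pi$. Uniqueness of its classical solution in the polynomial-growth class forces $\hat{J} = J^\pi$, and then $\hat{q} = \gL_{\aug}^a \hat{J} = \gL_{\aug}^a J^\pi = q^\pi$ via Proposition \ref{prop: equiv q L_aug}.

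The main obstacle is the reachability step in part (ii): passing from the trajectory-based identity to the genuinely pointwise identity $\gL_{\aug}^a \hat{J}(s, x, b_0, b_1) = \hat{q}(s, x, b_0, b_1, a)$ on the whole domain. The martingale hypothesis only tests quintuples $(s, X_s^{\pi'}, Y_s^{\pi'}, e^{-\delta(s-t)}, a_s^{\pi'})$ for which $b_1 = e^{-\delta(s-t)} \leq 1$, $b_0 = Y_s^{\pi'}$ is driven by the cumulative reward, and $a$ lies in the support of $\pi'$. One must argue -- using non-degeneracy of $\sigma$, freedom in choosing the initial $(t, x)$, and (typically) full support of $\pi'$ on $\gA$ -- that these quintuples form a dense subset of $[0, T] \times \R^d \times \R \times \R_+ \times \gA$, so that continuity of both sides closes the gap. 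This analytic subtlety is what licenses the clean PDE-based uniqueness argument, and mirrors the corresponding step in the proofs of Theorems \ref{thm: J q martingale}--\ref{thm: q pi pi' martingale}.
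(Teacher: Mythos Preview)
Your approach is correct and matches what the paper indicates: the paper omits an explicit proof of this theorem, stating only that it is ``similar to that of Theorem \ref{thm: J q martingale},'' whose forward direction is proved by exactly the It\^o's-lemma computation you describe (and whose converse direction in fact \emph{invokes} the present theorem as a lemma). Your sketch of part (ii) via the Feynman--Kac PDE \eqref{eq: J feynman kac} and uniqueness in the polynomial-growth class is the natural completion within the paper's framework, and your explicit flagging of the reachability subtlety is appropriate and goes slightly beyond what the paper spells out.
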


\begin{theorem}
\label{thm: optimal q function policy}
    Let a function $\hat{J}^* \in C^{1,2}([0, T) \times \R^d \times \R \times \R_+) \cap C([0, T] \times \R^d \times \R \times \R_+)$ with polynomial growth and a continuous function $\hat{q}^* \colon [0,T]\times \R^d \times \R \times \R_+ \times \gA \rightarrow \R$ be given such that for any $(t, x, b_0, b_1) \in [0, T] \times \R^d \times \R \times \R_+$,
    \begin{align}
    \label{eq: opt qv constraints}
        \hat{J}^*(T, x, b_0, b_1) = \varphi(b_0 + b_1 h(x)),\quad \int_{\gA} \exp\left\{ \frac{\hat{q}^*(t,x,b_0,b_1,a)}{\tau b_1} \right\} \rd a = 1.
    \end{align}
    Then:
    \begin{enumerate}[(i)]
        \item If $\hat{J}^*$ and $\hat{q}^*$ are respectively the optimal value function and the optimal q-function, then for any $\pi\in \Pi_{\mkv}$ and any $(t,x) \in [0,T] \times \R^d$, the following process is an $(\gF, \PB)$-martingale:
        \begin{align}
        \label{eq: optimal value policy martingale}
            \hat{J}^{*}(s, X_s^{\pi}, Y_s^{\pi}, e^{-\delta (s-t)}) - \int_t^s \hat{q}^*(u, X_u^{\pi}, Y_u^{\pi}, e^{-\delta (u-t)}, a_u^{\pi}) \rd u, \quad s\in [t,T].
        \end{align}
        Moreover, $\hat{\pi}^*(a\mid t, x, b_0, b_1) = \exp \left\{ \frac{\hat{q}^*(t,x,b_0,b_1,a)}{\tau b_1} \right\}$ is the optimal policy in this case.
        \item If there exists $\pi\in \Pi_{\mkv}$ such that for any $(t,x) \in [0,T] \times \R^d$, \eqref{eq: optimal value policy martingale} is an $(\gF, \PB)$-martingale, then $\hat{J}^*$ and $\hat{q}^*$ are respectively the optimal value function and the optimal q-function.
    \end{enumerate}
\end{theorem}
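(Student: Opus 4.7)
The backbone of both directions is an Itô expansion of $\hat{J}^*$ along the augmented diffusion \eqref{eq: augmented sde}, combined with Proposition~\ref{prop: equiv q L_aug}, which identifies the augmented generator $\mathcal{L}_{\aug}^a$ applied to a value function with its q-function. I would first fix an arbitrary Markov policy $\pi \in \Pi_{\mkv}$ with initial conditions $(X_t, B_{0,t}, B_{1,t}) = (x, 0, 1)$; the substitutions $B_{0,s}^\pi = Y_s^\pi$ and $B_{1,s}^\pi = e^{-\delta(s-t)}$ are then consistent with \eqref{eq: augmented sde}, and Itô's formula yields
\begin{align*}
    \rd \hat{J}^*(s, X_s^\pi, Y_s^\pi, e^{-\delta(s-t)}) = \mathcal{L}_{\aug}^{a_s^\pi} \hat{J}^*(s, X_s^\pi, Y_s^\pi, e^{-\delta(s-t)}) \rd s + \partial_x \hat{J}^* \, \sigma(s, X_s^\pi, a_s^\pi) \rd W_s.
\end{align*}
Hence the process in \eqref{eq: optimal value policy martingale} has drift $[\mathcal{L}_{\aug}^{a_s^\pi} \hat{J}^* - \hat{q}^*(\cdots, a_s^\pi)] \rd s$ plus a stochastic-integral martingale part, and the polynomial-growth hypothesis on $\hat{J}^*$ promotes any local martingale to a true $(\gF, \PB)$-martingale via standard BDG and Gronwall estimates.

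For direction (i), I would invoke the HJB characterization derived just before Section~\ref{sec: lowercase q}: the Boltzmann maximizer of its entropy-regularized pointwise inner objective is $\pi^*(a \mid \cdot) \propto \exp\{\mathcal{L}_{\aug}^a J^* / (\tau b_1)\}$, and the HJB equality $=0$ forces the partition function to equal one. Proposition~\ref{prop: equiv q L_aug} then yields $q^*(t, x, b_0, b_1, a) = \mathcal{L}_{\aug}^a J^*(t, x, b_0, b_1)$ for every action $a$, so under $\hat{J}^* = J^*, \hat{q}^* = q^*$ the Itô drift cancels identically and the desired martingale property follows. The closed-form $\hat{\pi}^*$ claimed in the theorem is exactly this Boltzmann maximizer, so this direction concludes.

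For direction (ii), the martingale hypothesis forces the drift to vanish $\rd s \otimes \rd \PB$-almost everywhere, yielding $\mathcal{L}_{\aug}^{a_s^\pi} \hat{J}^* = \hat{q}^*(\cdots, a_s^\pi)$ along each $\pi$-controlled trajectory launched from an arbitrary $(t, x)$. Varying $(t, x) \in [0, T] \times \R^d$, using continuity of both sides in $(t, x, b_0, b_1, a)$, exploiting the linear structural relations $B_{0,s}^\pi = b_0 + b_1 Y_s^\pi$ and $B_{1,s}^\pi = b_1 e^{-\delta(s-t)}$ to transport the identity off the initial slice $(b_0, b_1) = (0, 1)$, and invoking the canonical Boltzmann policy $\tilde\pi(a \mid \cdot) := \exp\{\hat{q}^*/(\tau b_1)\}$---a legitimate full-support Markov policy by the normalization constraint---to sweep the action space, I would promote this to the pointwise identity $\mathcal{L}_{\aug}^a \hat{J}^*(t, x, b_0, b_1) = \hat{q}^*(t, x, b_0, b_1, a)$ on the full domain. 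Combined with $\int_\gA \exp\{\hat{q}^*/(\tau b_1)\} \rd a = 1$ and the terminal condition, this is exactly the HJB equation (the Gibbs maximizer attains value zero precisely when the partition function is one), and a standard verification theorem then gives $\hat{J}^* = J^*$; Proposition~\ref{prop: equiv q L_aug} finally yields $\hat{q}^* = q^*$.

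The main obstacle is this final density/support step in (ii): promoting an identity that a priori holds only along specific $\pi$-controlled trajectories to a pointwise identity over all $(t, x, b_0, b_1, a)$. The cleanest route I see is the two-stage argument above---transport $(b_0, b_1)$ off the natural initial slice $(0,1)$ via the explicit linear dependence of $B_{0,s}^\pi, B_{1,s}^\pi$ on their initial values, and close the action-space gap by switching to the canonical Boltzmann policy defined through the normalization constraint, whose automatic full support forces the pointwise identity at every $a \in \gA$ via continuity of $\mathcal{L}_{\aug}^a \hat{J}^*$ and $\hat{q}^*$ in $a$.
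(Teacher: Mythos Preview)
Your direction (i) is essentially the paper's argument: It\^o along the augmented diffusion plus Proposition~\ref{prop: equiv q L_aug} cancels the drift.

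For direction (ii) you take a genuinely different route from the paper. The paper does not extract the pointwise identity $\mathcal{L}_{\aug}^a \hat{J}^* = \hat{q}^*$ and then appeal to HJB verification. Instead it argues modularly: define $\hat{\pi}^*(a\mid\cdot) := \exp\{\hat{q}^*/(\tau b_1)\}$, observe that the normalization constraint in \eqref{eq: opt qv constraints} is exactly the statement that $\hat{q}^*$ satisfies the policy-evaluation constraint \eqref{eq: J q pi pi' constraint} with respect to $\hat{\pi}^*$, invoke Theorem~\ref{thm: J q pi pi' martingale}(ii) (with target policy $\hat{\pi}^*$ and behavior policy the given $\pi$) to conclude $\hat{J}^* = J^{\hat{\pi}^*}$ and $\hat{q}^* = q^{\hat{\pi}^*}$, and finally note that $\hat{\pi}^*$ is a fixed point of the softmax policy-improvement map, hence optimal. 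This route buries the density/support issue inside the (omitted) proof of Theorem~\ref{thm: J q pi pi' martingale}, whereas yours confronts it head-on; the paper's version is cleaner and reuses existing machinery, yours is more self-contained.

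One point in your argument does not work as written. In the action-sweep step you propose ``invoking the canonical Boltzmann policy $\tilde{\pi}$\dots to sweep the action space,'' but the hypothesis of (ii) gives you the martingale property only for the \emph{given} $\pi$, not for $\tilde{\pi}$; you cannot switch behavior policies mid-argument. The correct fix is simpler: in the entropy-regularized framework of the paper, admissible Markov policies are absolutely continuous with full support on $\gA$ (otherwise the differential entropy term is ill-posed), so the given $\pi$ already sweeps all of $\gA$ at each state, and continuity of $\mathcal{L}_{\aug}^a \hat{J}^*$ and $\hat{q}^*$ in $a$ closes the gap without any policy switch.
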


\clearpage
\section{Details on Dynamic Portfolio Selection}
\label{app: detail dps}

In this section, we provide a more comprehensive introduction to dynamic portfolio selection. Consider a market with two risky assets, with their prices following the log-normal dynamics:
\begin{align}
    \rd S_{1,t} = S_{1,t} (r_1 \rd t + \sigma_1 \rd W_{1,t}), \quad \rd S_{2,t} = S_{2,t} (r_2 \rd t + \sigma_2 \rd W_{2,t}),
\end{align}
where $r_1, r_2 \in \R$, $\sigma_1, \sigma_2 \in \R_+$, and $(W_{1,t})_{t\geq 0}, (W_{2,t})_{t\geq 0}$ are two independent Brownian motions. Suppose we have a $\$ 1$ budget at $t=0$ and would like to invest all the money in these two assets; at each time $t$, we are allowed to reallocate the money between assets. Denoting the proportion of money invested on the first asset as $a_t$, our budget $X_t$ follows another log-normal process:
\begin{align}
\label{eq: portfolio sde app}
    \rd X_t = X_t \{a_t r_1 + (1-a_t) r_2\} \rd t + X_t \{a_t\sigma_1 \rd W_{1,t} + (1 - a_t) \sigma_2 \rd W_{2,t} \}.
\end{align}
Here, $a_t$'s are possible to take values outside the unit interval $[0,1]$, as we allow shorting of an asset.

We are concerned with our budget $X_T$ at the end of the whole period without any discount in time; i.e., $\delta = 0$. We choose the mean-variance risk measure as our objective:
\begin{align}
    \MV(X_T) = \E[X_T] - \frac{\alpha}{2} \Var(X_T).
\end{align}
Then the corresponding utility function and reward functions are as follows:
\begin{align}
\label{eq: portfolio reward}
    \varphi(x) = x - \frac{\alpha}{2} x^2,\quad r(t,x,a) = 0,\quad h(x) = x.
\end{align}

The remainder of this section is organized as follows. Appendix \ref{app: mv_portfolio analytical} gives analytical formulae for the optimal value function and its corresponding q-function with zero exploration factor; i.e., $\tau = 0$. According to the forms of functions, we design their parameterization with well specification at optimum in Appendix \ref{app: mv_portfolio parameterization}. In Appendix \ref{app: mv_portfolio numerical}, we numerically justify that the optimal parameter in our parameterization is a stable point, which indicates local convergence from near the optimum.

\subsection{Analytical solution to optimal control}
\label{app: mv_portfolio analytical}

The following proposition gives the analytical formulae of the optimal control, the optimal value function, and its corresponding q-function. We assume zero exploration factor, $\tau=0$, for simplicity. Note that both the value function and the q-function are quadratic in $x$ and $a$.

\begin{proposition}
\label{prop: mv_portfolio optimal}
    The optimal control to \eqref{eq: portfolio sde app}-\eqref{eq: portfolio reward}, the optimal value function, and its corresponding q-function are
    \begin{align*}
        &a^*(t,x,b_0,b_1) = \frac{\sigma_2^2}{\sigma_1^2 + \sigma_2^2} - \frac{r_1 - r_2}{\sigma_1^2 + \sigma_2^2} \left( 1 + \frac{c_1}{2c_2 x} \right), \\
        &J^*(t,x,b_0,b_1) = c_0 + c_1 x + c_2 x^2, \\
        &q^*(t,x,b_0,b_1,a) = (\sigma_1^2 + \sigma_2^2) c_2 x^2\{a - a^*(t,x,b_0,b_1)\}^2,
    \end{align*}
    where $c_0,c_1,c_2$ are defined as follows:
    \begin{align*}
        &c_0 = c_0(t,b_0,b_1) = b_0\left(1 - \frac{\alpha}{2}b_0 \right) + \frac{(1-\alpha b_0)^2 P_{nl}}{2\alpha(P_{xx} + P_{nl})} \left[ 1 - e^{-2(P_{xx}+P_{nl})(T-t)} \right], \\
        &c_1 = c_1(t,b_0,b_1) = (1 - \alpha b_0) b_1 e^{(P_x - 2P_{nl})(T-t)}, \\
        &c_2 = c_2(t,b_0,b_1) = -\frac{\alpha}{2}b_1^2 e^{2(P_x+P_{xx} - P_{nl})(T-t)}, \\
        &P_x = \frac{r_1 \sigma_2^2 + r_2 \sigma_1^2}{\sigma_1^2 + \sigma_2^2},\quad P_{xx} = \frac{\sigma_1^2\sigma_2^2}{2(\sigma_1^2 + \sigma_2^2)}, \quad P_{nl} = \frac{(r_1 - r_2)^2}{2(\sigma_1^2 + \sigma_2^2)}.
    \end{align*}
\end{proposition}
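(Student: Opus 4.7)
The plan is classical stochastic control via the HJB equation, combined with a quadratic ansatz whose coefficients decouple into elementary first-order ODEs.

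First I would reduce the augmented state. Since $r\equiv 0$ and $\delta = 0$, the augmented dynamics \eqref{eq: augmented sde} force $B_{0,s}\equiv b_0$ and $B_{1,s}\equiv b_1$, so $J^*$ depends on $(b_0,b_1)$ only as parameters, and the terminal condition $\varphi(b_0 + b_1 X_T) = (b_0 + b_1 X_T) - \frac{\alpha}{2}(b_0 + b_1 X_T)^2$ is quadratic in $X_T$. This motivates the ansatz
\[
J^*(t,x,b_0,b_1) = c_0(t,b_0,b_1) + c_1(t,b_0,b_1)\, x + c_2(t,b_0,b_1)\, x^2,
\]
whose terminal values $c_0(T) = b_0(1 - \frac{\alpha}{2}b_0)$, $c_1(T) = (1-\alpha b_0) b_1$, $c_2(T) = -\frac{\alpha}{2}b_1^2$ are read off the expanded $\varphi$.

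Next I would solve the HJB equation $\partial_t J^* + \sup_a \gL^a J^* = 0$ (with $\tau = 0$). Under the ansatz the supremand
\[
F(a) := x\bigl[r_2 + a(r_1-r_2)\bigr](c_1 + 2c_2 x) + c_2 x^2\bigl[a^2(\sigma_1^2+\sigma_2^2) - 2a\sigma_2^2 + \sigma_2^2\bigr]
\]
is quadratic in $a$ with leading coefficient $c_2 x^2(\sigma_1^2 + \sigma_2^2) < 0$. Solving $F'(a) = 0$ gives the stated closed form for $a^*$, and since $F$ is quadratic in $a$,
\[
F(a) - F(a^*) = c_2 x^2(\sigma_1^2 + \sigma_2^2)(a - a^*)^2.
\]
Substituting $a^*$ back into $F$ and collecting in powers of $x$ exposes $P_x, P_{xx}, P_{nl}$ as the natural groupings (via $r_1\sigma_2^2 + r_2\sigma_1^2 = (\sigma_1^2+\sigma_2^2)P_x$, and analogous identities), yielding
\[
\sup_a F(a) = -\frac{P_{nl}}{2c_2}\, c_1^2 + (P_x - 2P_{nl})\, c_1\, x + 2(P_x + P_{xx} - P_{nl})\, c_2\, x^2.
\]

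Matching coefficients of $1, x, x^2$ in the HJB produces the decoupled ODE system
\[
\dot c_2 = -2(P_x + P_{xx} - P_{nl})\, c_2, \qquad \dot c_1 = -(P_x - 2P_{nl})\, c_1, \qquad \dot c_0 = \frac{P_{nl}\, c_1^2}{2c_2},
\]
with the terminal data above. The first two integrate to pure exponentials; the exponent of $c_1^2/c_2$ in the third ODE collapses to $-2(P_{xx}+P_{nl})$, so $c_0$ is obtained by an elementary integration. Finally, for the q-function, Proposition \ref{prop: equiv q L_aug} combined with $r=0$ and $\delta=0$ gives $q^* = \gL^a J^* = \partial_t J^* + F(a) = F(a) - F(a^*)$, which by the residual identity above is exactly the claimed formula. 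The main obstacle is purely algebraic: carefully carrying out the substitution of $a^*$ into $F$ and verifying that the seemingly tangled coefficients collapse cleanly into $P_x, P_{xx}, P_{nl}$ with the correct signs; once that is done, the ODE integration and the q-function identification are mechanical.
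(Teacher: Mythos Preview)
Your proposal is correct and follows essentially the same route as the paper: write the HJB equation, optimize over $a$ to obtain $a^*$, introduce the quadratic-in-$x$ ansatz, match coefficients to decouple into three first-order ODEs with the stated terminal data, and integrate. Your derivation of $q^*$ via $q^* = \partial_t J^* + F(a) = F(a) - F(a^*)$ is slightly slicker than the paper's direct completion of the square, but otherwise the arguments coincide; just remember to close the loop by noting that the solved $c_2 = -\tfrac{\alpha}{2}b_1^2 e^{\cdots} < 0$ a posteriori justifies the concavity assumption used when maximizing over $a$.
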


\begin{proof}[Proof of Proposition \ref{prop: mv_portfolio optimal}]
    Recall the optimal value function is
    \begin{align*}
        J^*(t,x,b_0,b_1) = \sup_a \E\left[(b_0 + b_1 X_T) - \frac{\alpha}{2}(b_0 + b_1 X_T)^2 \mid X_t = x \right].
    \end{align*}
    Its HJB equation is as follows:
    \begin{align*}
        \sup_{a} \left\{ \partial_t J^* + \{ a r_1 + (1-a) r_2 \} x\partial_x J^* + \frac{1}{2} \{a^2 \sigma_1^2 + (1-a)^2 \sigma_2^2\} x^2 \partial^2_{xx} J^* \right\} = 0.
    \end{align*}
    For now we assume $\partial_{xx}^2 J^* < 0$, so the left-hand side is a quadratic function with respect to $a$ and achieves optimality at
    \begin{align*}
        a^* = \frac{\sigma_2^2 x^2 \partial^2_{xx} J^* - (r_1 - r_2) x \partial_x J^*}{(\sigma_1^2 + \sigma_2^2) x^2 \partial^2_{xx} J^*} = \frac{\sigma_2^2}{\sigma_1^2 + \sigma_2^2} - \frac{(r_1 - r_2) \partial_x J^*}{(\sigma_1^2 + \sigma_2^2) x \partial^2_{xx} J^*},
    \end{align*}
    and by plugging back into the HJB equation we obtain
    \begin{align}
    \label{eq: portolio selection hjb}
        \partial_t J^* + \underbrace{\frac{r_1\sigma_2^2 + r_2 \sigma_1^2}{\sigma_1^2 + \sigma_2^2} }_{P_x} x \partial_x J^* + \underbrace{ \frac{\sigma_1^2 \sigma_2^2}{2(\sigma_1^2 + \sigma_2^2)} }_{P_{xx}} x^2 \partial_{xx} J^* - \underbrace{ \frac{(r_1 - r_2)^2}{2(\sigma_1^2 + \sigma_2^2)} }_{P_{nl}} \frac{(\partial_x J^*)^2}{\partial_{xx} J^*} = 0.
    \end{align}
    
    Assume the value function has a quadratic form in $x$: 
    \begin{align*}
        J^*(t,x,b_0,b_1) = c_0(t,b_0,b_1) + c_1(t,b_0,b_1) x + c_2(t,b_0,b_1) x^2,
    \end{align*}
    which satisfies the terminal condition:
    \begin{align*}
        c_0(T,b_0,b_1) = b_0 - \frac{\alpha}{2}b_0^2,\quad c_1(T,b_0,b_1) = b_1 - \alpha b_0 b_1,\quad c_2(T, b_0, b_1) = -\frac{\alpha}{2} b_1^2.
    \end{align*}
    The partial derivatives with respect to $t$ and $x$ are thus given by
    \begin{align*}
        \partial_t J^* = \dot{c}_0 + \dot{c}_1 x + \dot{c}_2 x^2, \quad \partial_x J^* = c_1 + 2c_2 x, \quad \partial^2_{xx} J^* = 2c_2.
    \end{align*}
    Plug these into \eqref{eq: portolio selection hjb} and note that it holds for any $x\in \R^d$, so the following differential equations should be satisfied:
    \begin{align*}
        \dot{c}_0 - P_{nl} \frac{c_1^2}{2c_2} = 0,& \quad c_0(T,b_0,b_1) = b_0 - \frac{\alpha}{2}b_0^2, \\
        \dot{c}_1 + (P_x - 2P_{nl}) c_1 = 0,& \quad c_1(T,b_0,b_1) = b_1 - \alpha b_0 b_1, \\
        \dot{c}_2 + 2(P_x + P_{xx} - P_{nl}) c_2 = 0,& \quad c_2(T, b_0, b_1) = -\frac{\alpha}{2} b_1^2.
    \end{align*}
    Solving these equations yields
    \begin{align*}
        c_0(t,b_0,b_1) &= b_0\left(1 - \frac{\alpha}{2}b_0 \right) + \frac{(1-\alpha b_0)^2 P_{nl}}{2\alpha(P_{xx} + P_{nl})} \left[ 1 - e^{-2(P_{xx}+P_{nl})(T-t)} \right], \\
        c_1(t,b_0,b_1) &= (1 - \alpha b_0) b_1 e^{(P_x - 2P_{nl})(T-t)}, \\
        c_2(t,b_0,b_1) &= -\frac{\alpha}{2}b_1^2 e^{2(P_x+P_{xx} - P_{nl})(T-t)}.
    \end{align*}
    Note that $c_2<0$, so the previous assumption $\partial_{xx}^2 J^* < 0$ is verified. Hence, the corresponding optimal control is
    \begin{align*}
        a^*(t,x,b_0,b_1) = \frac{\sigma_2^2}{\sigma_1^2 + \sigma_2^2} - \frac{r_1 - r_2}{\sigma_1^2 + \sigma_2^2}\left( 1 + \frac{c_1}{2c_2x} \right).
    \end{align*}

    Finally, we turn to the q-function. Applying \eqref{eq: q-function} yields
    \begin{align*}
        & q^*(t,x,b_0,b_1,a) \\
        &= \gL^a J^{*}(t, x, b_0, b_1) + b_1 r(t,x,a) \partial_{b_0} J^*(t,x,b_0,b_1) - b_1 \delta \partial_{b_1} J^*(t,x,b_0,b_1) \\
        &= (\dot{c}_0 + \dot{c}_1 x + \dot{c}_2 x^2) + \{ar_1+(1-a)r_2\} x(c_1+2c_2x) + \{a^2\sigma_1^2 + (1-a)^2 \sigma_2^2\} c_2 x^2 \\
        &= (\sigma_1^2 + \sigma_2^2) c_2 x^2 a^2 + \{(r_1-r_2)(c_1x + 2c_2 x^2) - 2\sigma_2^2 c_2 x^2\} a + \text{remainder} \\
        &= (\sigma_1^2 + \sigma_2^2) c_2 x^2 \left( a + \frac{(r_1-r_2)(c_1x + 2c_2 x^2) - 2\sigma_2^2 c_2 x^2}{2(\sigma_1^2 + \sigma_2^2) c_2 x^2} \right)^2 \\
        &= (\sigma_1^2 + \sigma_2^2) c_2 x^2 \{ a - a^*(t,x,b_0,b_1)\}^2,
    \end{align*}
    where the second to last equality is because the remainder term is independent with $a$ and $\max_{a} q^*(t,x,b_0,b_1,a) = 0$.
\end{proof}

\subsection{Parameterization of trainable functions}
\label{app: mv_portfolio parameterization}

This section discusses how we parameterize the value function and the q-function.

The value function is parameterized as $J^{\theta} = c_0^{\theta} + c_1^{\theta} x + c_2^{\theta} x^2$ with the parameter $\theta = (\theta_{P_x}, \theta_{P_{xx}}, \theta_{P_{nl}})$, where $c^{\theta}_0$, $c^{\theta}_1$, $c^{\theta}_2$ are defined as follows:
\begin{align*}
    &c^{\theta}_0 = c^{\theta}_0(t,b_0,b_1) = b_0\left(1 - \frac{\alpha}{2}b_0 \right) + \frac{(1-\alpha b_0)^2 \theta_{P_{nl}}}{2\alpha(\theta_{P_{xx}} + \theta_{P_{nl}})} \left[ 1 - e^{-2(\theta_{P_{xx}}+\theta_{P_{nl}})(T-t)} \right], \\
    &c^{\theta}_1 = c^{\theta}_1(t,b_0,b_1) = (1 - \alpha b_0) b_1 e^{(\theta_{P_x} - 2\theta_{P_{nl}})(T-t)}, \\
    &c^{\theta}_2 = c^{\theta}_2(t,b_0,b_1) = -\frac{\alpha}{2}b_1^2 e^{2(\theta_{P_x}+\theta_{P_{xx}} - \theta_{P_{nl}})(T-t)}.
\end{align*}

The q-function is parameterized as $q^{\psi}(t,x,b_0,b_1,a) = \psi_{sv} c_2^{\psi} x^2 ( a - a^{\psi} )^2$ with the parameter $\psi = (\psi_{a_0}, \psi_{a_1}, \psi_{sv}, \psi_{c_1^e}, \psi_{c_2^e})$, where
\begin{align*}
    c_1^{\psi} &= c^{\psi}_1(t,b_0,b_1) = (1 - \alpha b_0) b_1 e^{\psi_{c_1^e}(T-t)}, \\
    c_2^{\psi} &= c^{\psi}_2(t,b_0,b_1) = -\frac{\alpha}{2}b_1^2 e^{\psi_{c_2^{e}}(T-t)}, \\
    a^{\psi} &= \psi_{a_0} - \psi_{a_1} \left( 1 + \frac{c_1^{\psi}}{2 c_2^{\psi} x} \right).
\end{align*}

Note that our parameterization well specifies the ground truth model when the following optimal parameters are achieved:
\begin{align*}
    &\theta^*_{P_x} = P_x,\quad \theta^*_{P_{xx}} = P_{xx},\quad \theta^*_{P_{nl}} = P_{nl}, \\
    &\psi^*_{a_0} = \frac{\sigma_2^2}{\sigma_1^2 + \sigma_2^2},\quad \psi^*_{a_1} = \frac{r_1 - r_2}{\sigma_1^2 + \sigma_2^2},\quad \psi^*_{sv} = \sigma_1^2 + \sigma_2^2, \\ 
    &\psi^*_{c_1^e} = P_x - 2P_{nl},\quad \psi^*_{c_2^e} = 2(P_x + P_{xx} - P_{nl}).
\end{align*}

\subsection{Numerical analysis of temporal difference around optimum}
\label{app: mv_portfolio numerical}

Figure \ref{fig:mv_portfolio optimal grad} shows temporal difference of parameters around the optimum. The environment parameters are set as $r_1 = 0.15$, $r_2 = 0.25$, $\sigma_1 = 0.1$, $\sigma_2 = 0.12$, and the hyper-parameters are set as $T=1.0$, $\Delta t=0.001$, $\alpha = 1.0$. We simulate $N=10,000$ trajectories starting from the initial budget $X_0 = 1.0$. In this setting, the value of optimal parameters $\theta^* = (\theta^*_{P_x}, \theta^*_{P_{xx}}, \theta^*_{P_{nl}})$ and $\psi^* = (\psi^*_{a_0}, \psi^*_{a_1}, \psi^*_{sv}, \psi^*_{c_1^e}, \psi^*_{c_2^e})$ are as follows:
\begin{align*}
    \theta^*_{P_x} = 0.1910, \quad \theta^*_{P_{xx}} &= 0.0030, \quad \theta^*_{P_{nl}} = 0.2049, \\
    \psi^*_{a_0} = 0.5902, \quad \psi^*_{a_1} = -4.0984, \quad \psi^*_{sv} &= 0.0244, \quad \psi^*_{c_1^e} = -0.2189, \quad \psi^*_{c_2^e} = -0.0220.
\end{align*}
For each parameter, it is perturbed around its optimal value by a small amount, while other parameters remain fixed. The dot at zero in each plot corresponds to the optimum, and the dashed black lines are the zero horizontal and vertical lines on which the update for the parameter is zero.

From the plots in Figure \ref{fig:mv_portfolio optimal grad} we find that, for any of the eight parameters in the value function or the q-function, its temporal difference crosses zero from above to below, which indicates the optimum is a stable point. This phenomenon numerically provides guarantees for convergence to optimum of the given optimization algorithm.

\begin{figure}[ht]
    \centering
    \includegraphics[width=1.0\linewidth]{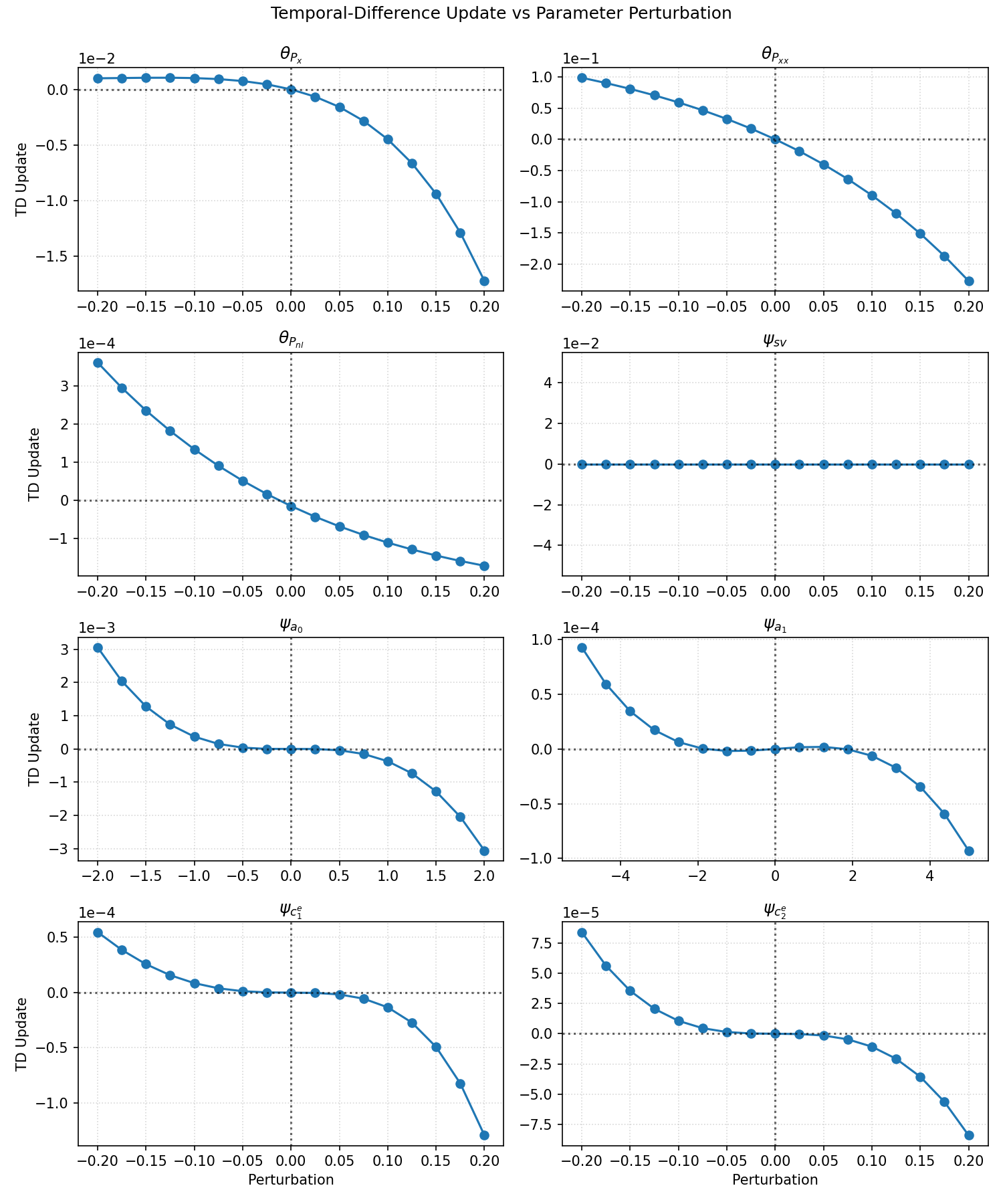}
    \caption{Temporal difference of parameters around optimum (zeros in the plots above).}
    \label{fig:mv_portfolio optimal grad}
\end{figure}

\clearpage
\section{Proofs}

\subsection{Proof of Proposition \ref{prop: markovian policy}}
\label{app: oce markov}

We adopt three steps to prove the proposition: (i) derivation of the augmented value function \eqref{eq: J}, (ii) Markovian optimality in the augmented SDE \eqref{eq: augmented sde}-\eqref{eq: J}, and (iii) equivalence between optimal policies in the original SDE \eqref{eq: sde}-\eqref{eq: J0} and the augmented SDE \eqref{eq: augmented sde}-\eqref{eq: J}.

\paragraph{Derivation of the augmented value function.} \label{app: derive aug value} Suppose the risk measure $U$ is expectation and the discount factor for rewards is zero as in the conventional finite-horizon RL setting. The entropy-regularized value function should be
\begin{align*}
    J^\pi(t, x, b_0, b_1) &= \E\bigg[ \int_t^T r_{\aug}(s,X_s^\pi,B_{0,s}^\pi,B_{1,s}^\pi,a_s^\pi) \rd s + h_{\aug}(X_T^\pi, B_{0,T}^\pi, B_{1,T}^\pi) \\
    &\qquad\quad - \tau \int_t^T B_{1,s}^\pi \log \pi(a_s^\pi\mid \gF_s^X) \rd s \mid X_t^\pi = x, B_{0,t}^\pi = b_0, B_{1,t}^\pi = b_1 \bigg] \\
    &= \E\big[ \varphi(B_{0,T}^\pi + B_{1,T}^\pi h(X_T^\pi)) \mid X_t^\pi = x, B_{0,t}^\pi = b_0, B_{1,t}^\pi = b_1 \big] + \tau b_1 \Ent(\pi;t,x) \\
    &= \E\bigg[ \varphi\bigg( \underbrace{b_0 + b_1 \int_t^T e^{-\delta(s-t)}r(s, X_s^\pi, a_s^\pi) \rd s}_{B_{0,T}^\pi} \\
    &\qquad\qquad + \underbrace{b_1 e^{-\delta(T-t)}}_{B_{1,T}^\pi} h(X_T^\pi) \bigg) \mid X_t^\pi = x, B_{0,t}^\pi = b_0, B_{1,t}^\pi = b_1 \bigg] + \tau b_1 \Ent(\pi;t,x) \\
    &= \E[\varphi(b_0 + b_1 Z^{\pi}(t, x))] + \tau b_1 \Ent(\pi;t,x),
\end{align*}
which concludes the derivation of \eqref{eq: J}. Note that the entropy regularization $\Ent(\pi; t, x)$ is still $\delta$-discounted across time to align with its original definition, despite absence of discount in rewards.

\paragraph{Markovian optimality in the augmented SDE.} The Markovian optimality follows naturally from the conventional RL literature \citep{puterman2014markov}. More detailed examples can be found in \cite{jia2022policy1,jia2022policy2,jia2023q}.

\paragraph{Equivalence between optimal policies.} \label{app: equiv opt val policy} By the definition of OCE, we have
\begin{align*}
    J_0^*(t,x) &= \max_{\pi \in \Pi_{\HD}} \left\{ \max_{b\in \R} \{ b + \E[\varphi(Z^{\pi}(t, x) - b)] \} + \tau \Ent(\pi;t,x) \right\} \\
    &= \max_{b\in \R} \left\{ b + \max_{\pi \in \Pi_{\HD}} \{ \E[\varphi(Z^{\pi}(t, x) - b)] + \tau \Ent(\pi;t,x) \} \right\} \\
    &= \max_{b\in \R} \left\{ b + \max_{\pi \in \Pi_{\HD}} J^{\pi}(t,x,-b,1) \right\} \\
    &= \max_{b\in \R} \left\{ b + \max_{\pi \in \Pi_{\mkv}} J^{\pi}(t,x,-b,1) \right\} \\
    &= \max_{b\in \R} \left\{ b + J^{*}(t,x,-b,1) \right\}.
\end{align*}
Denoting $\pi^*$ as the Markovian optimal policy that achieves $J^*$ in the last equality above, and letting $b^* = \argmax_{b\in \R} \left\{ b + J^{*}(t,x,-b,1) \right\}$, we can construct a history-dependent policy of the original SDE \eqref{eq: sde}-\eqref{eq: J0} as follows:
\begin{align}
\label{eq: pi_0^*}
    \pi^*_0(a\mid \gF_s^X) = \pi^*(a\mid s, X_s, B_{0,s}, B_{1,s}),\quad s\in [t,T],
\end{align}
where $(X_s, B_{0,s}, B_{1,s})$ follows the augmented SDE \eqref{eq: augmented sde} starting from $X_t = x, B_{0,t}=-b^*, B_{1,t}=1$.
Hence, we have
\begin{align*}
    J_0^{\pi^*_{0}}(t, x) &= \max_{b\in \R} \big\{ b + \E\big[\varphi\big(Z^{\pi^*_{0}}(t, x) - b\big)\big] \big\} + \tau \Ent(\pi;t,x) \\
    &\geq b^* + \E\big[\varphi\big(Z^{\pi^*_{0}}(t, x) - b^*\big)\big] + \tau \Ent(\pi;t,x) \\
    &= b^* + J^{\pi^*}(t, x, -b^*, 1) \\
    &= b^* + J^{*}(t, x, -b^*, 1) \\
    &= J^{*}_0(t,x),
\end{align*}
where we have used the definitions of $J_0^\pi$ as \eqref{eq: J0} and $J^\pi$ as \eqref{eq: J}, the relationship between $\pi_0^*$ and $\pi^*$ as \eqref{eq: pi_0^*}, and the property of $J_0^*$ derived above. This means $\pi^*_{0}$ is the optimal policy for the original SDE \eqref{eq: sde}-\eqref{eq: J0}, which concludes the proof.

\begin{remark}
    The optimal policy $\pi_0^*$ depends on the starting time $t$ of the SDE \eqref{eq: sde}. In other words, for $0 < t_1 < t_2 < s < T$, the optimal policy at time $s$ that attains $J_0^*(t_1,\cdot)$ is different from the one that attains $J_0^*(t_2,\cdot)$. This is because the augmented MDP includes the state $b_1$ which implicitly tracks the time duration from start. For notational simplicity, we omit the policy's dependence on the starting time $t$; we will always refer to an optimal policy along with an optimal value at a specific time $t$.
\end{remark}

\subsection{Proof of Theorem \ref{thm: J q martingale}}

If $\hat{J}^*=J^*$ and $\hat{q}^*=q^*$ are respectively the optimal value function and the optimal q-function, applying It\^o's lemma to the process $J^*(s, X_s^\pi, Y_s^\pi, e^{-\delta(s-t)})$ yields
\begin{align*}
    &J^*(s, X_s^\pi, Y_s^\pi, e^{-\delta (s-t)}) - J^*(t, x, 0, 1) - \int_t^s q^*(u, X_u^\pi, Y_u^\pi, e^{-\delta (u-t)}, a_u^\pi) \rd u \\
    &= \int_t^s \partial_x J^*(u, X_u^\pi, Y_u^\pi, e^{-\delta (u-t)}) \cdot \sigma(u, X_u^\pi, a_u^\pi) \rd W_u,
\end{align*}
which is an $(\gF, \PB)$-martingale.

Conversely, we consider the case where \eqref{eq: j q martingale} is an $(\gF, \PB)$-martingale, The second constraint in \eqref{eq: constraints} implies that $\hat{\pi}^*(a\mid t, x, b_0, b_1) := \exp\{ \hat{q}^*(t,x,b_0,b_1,a) / (\tau b_1) \}$ is a probability density function, and $\hat{q}^*(t, x, b_0, b_1, a) = \tau b_1 \log \hat{\pi}^*(a\mid t, x, b_0, b_1)$. Hence $\hat{q}^*(t, x, b_0, b_1, a)$ satisfies the second constraint in \eqref{eq: J q pi pi' constraint} with respect to the policy $\hat{\pi}^*$. It then follows from Theorem \ref{thm: J q pi pi' martingale} that $\hat{J}^*$ and $\hat{q}^*$ are respectively the value function and the q-function associated with $\hat{\pi}^*$. In addition, since $\hat{\pi}^*$ is the fixed point of the policy improvement iteration:
\begin{align*}
    \hat{\pi}^*(a\mid t, x, b_0,b_1) = \exp\left\{ \frac{\hat{q}^*(t,x,b_0,b_1,a)}{\tau b_1} \right\} = \frac{\exp\{ {\hat{q}^*(t,x,b_0,b_1,a)}/(\tau b_1) \}}{\int_{\gA} \exp\{ {\hat{q}^*(t,x,b_0,b_1,a)}/(\tau b_1) \} \rd a},
\end{align*}
we conclude that $\hat{\pi}^*$ is the optimal policy and thus $\hat{J}^*$ and $\hat{q}^*$ are the optimal value function and the optimal q-function, respectively.

\end{document}